\newcommand{\norm}[1]{\left\lVert#1\right\rVert}
\newcommand{\td}{\mathrm{td}}\newcommand{\bfx}{{\bf x}}\newcommand{\bfb}{{\bf b}}\DeclareMathOperator*{\argmax}{argmax}
\newtheorem{theorem}{Theorem}
\newtheorem{lemma}{Lemma}
\newcommand{\ttitle}{Matrix tri-factorization over the tropical semiring}
\newcommand{\tkeywords}{tropical semiring, tri-factorization, network structure analysis, four-partition network}
\title{\ttitle}
\author{
Amra Omanović \\
  Faculty of Computer and Information Science\\
  University of Ljubljana\\
  Večna pot 113, 1000 Ljubljana, Slovenia\\
  \texttt{amra.omanovic@fri.uni-lj.si} \\
\And
Polona Oblak \\
  Faculty of Computer and Information Science\\
  University of Ljubljana\\
  Večna pot 113, 1000 Ljubljana, Slovenia\\
  \texttt{polona.oblak@fri.uni-lj.si} \\
\And
Tomaž Curk \\
Faculty of Computer and Information Science\\
  University of Ljubljana\\
  Večna pot 113, 1000 Ljubljana, Slovenia\\
  \texttt{tomaz.curk@fri.uni-lj.si} \\
}
\date{}
\begin{document}
\maketitle

\begin{abstract}
Tropical semiring has proven successful in several research areas, including optimal control, bioinformatics, discrete event systems, or solving a decision problem. In previous studies, a matrix two-factorization algorithm based on the tropical semiring has been applied to investigate bipartite and tripartite networks. Tri-factorization algorithms based on standard linear algebra are used for solving tasks such as data fusion, co-clustering, matrix completion, community detection, and more.
However, there is currently no tropical matrix tri-factorization approach, which would allow for the analysis of multipartite networks with a high number of parts.
To address this, we propose the \texttt{triFastSTMF} algorithm, which performs tri-factorization over the tropical semiring. We apply it to analyze a four-partition network structure and recover the edge lengths of the network. We show that \texttt{triFastSTMF} performs similarly to \texttt{Fast-NMTF} in terms of approximation and prediction performance when fitted on the whole network. When trained on a specific subnetwork and used to predict the whole network, \texttt{triFastSTMF} outperforms \texttt{Fast-NMTF} by several orders of magnitude smaller error. The robustness of \texttt{triFastSTMF} is due to tropical operations, which are less prone to predict large values compared to standard operations.  \end{abstract}

\keywords{\tkeywords}

\section{Introduction}
Matrix factorization methods embed data into a latent space using a two-factorization or tri-factorization approach, depending on the number of low-dimensional factor matrices required for the specific task. Matrix factorization methods can help solve problems in recommender systems~\cite{koren2009matrix}, pattern recognition~\cite{liu2006nonnegative}, data fusion~\cite{datafusion}, network structure analysis~\cite{linear_regression}, and similar. In many of these scenarios, two-factorization achieves state-of-the-art results. However, there are cases where tri-factorization outperforms two-factorization, such as in intermediate data fusion~\cite{datafusion}, where tri-factorization is used to fuse multiple data sources to improve the predictive power of the model.

Matrix factorization methods employ different types of operations to compute the factor matrices~\cite{latitude, capricorn, subalgs}. Most matrix factorization methods are based on standard linear algebra, such as non-negative matrix factorization~\cite{nmf} (\texttt{NMF}), binary matrix factorization~\cite{bmf} (\texttt{BMF}), probabilistic NMF~\cite{mnih2008probabilistic} (\texttt{PMF}), while some novel approaches such as \texttt{STMF}~\cite{stmf} and \texttt{FastSTMF}~\cite{faststmf} are based on the tropical semiring.

The \emph{$(\max,+)$ semiring} or \emph{tropical semiring} $\mathbb{R}_{\max}$ is the set $\mathbb{R} \cup \{-\infty\}$, equipped with $\max$ as addition ($\oplus$), and $+$ as multiplication ($\otimes$). 
For example, $2\oplus3=3$ and $1\otimes1=2$. 
Throughout the paper, the symbols ``$+$'' and ``$-$'' refer to standard operations of addition and subtraction.
The renowned \texttt{NMF} method~\cite{nmf} is based on the element-wise sum, which results in the ``parts-of-whole'' interpretation of factor matrices. On the contrary, tropical or $(\max,+)$ factorization uses the maximum operator, which results in a ``winner-takes-it-all'' interpretation~\cite{cancer}.
Matrix factorization approaches using tropical semiring demonstrated their robustness against overfitting and achieved predictive performance comparable to techniques that use standard linear algebra. Moreover, they also reveal different patterns, as we have demonstrated in our previous studies~\cite{stmf,faststmf}.

Tropical semirings have various applications in network structure analysis and other research areas~\cite{nonlinear_recomm, semiring_rank_mf, tropical_geometry}. 
Multiplication and addition of a similar $(\min, +)$ semiring enable mapping local edge information to global information on the \textit{shortest paths}, while the $(\max, +)$ semiring describes the \textit{longest path} problem. In our work, we are interested in an inverse problem that infers information about edges from potentially noisy or incomplete information~\cite{linear_regression}. 
To the best of our knowledge, there is no matrix tri-factorization method based on the tropical semiring. Thus, we propose the first tropical tri-factorization method, called \texttt{triFastSTMF}, which introduces a third factor matrix.
The proposed \texttt{triFastSTMF} can be used for various tasks that involve a \textit{single} data source. Our GitHub repository \url{https://github.com/Ejmric/triFastSTMF} provides the source code and data required to replicate our experiments. We demonstrate the applicability of \texttt{triFastSTMF} in edge approximation and prediction in a four-partition network. Moreover, this work sets the foundation for future research aimed at creating a tropical data fusion model capable of combining \textit{multiple} data sources.

The paper is divided into the following sections. Section~\ref{sec:relatedwork} describes the related methodology, while Section~\ref{sec:methods} introduces the proposed approach. In Section~\ref{sec:results}, we present the experimental evaluation. We conclude the work and discuss future opportunities in Section~\ref{sec:conclusion}.

\section{Related work}
\label{sec:relatedwork}
Matrix factorization (MF) is one of the most popular methods for data embedding, which enables the discovery of interesting feature patterns by clustering and gaining additional knowledge from the resulting factor matrices.
A well-known matrix two-factorization approach is non-negative matrix factorization (\texttt{NMF}), which imposes non-negativity on both the input and output factor matrices for a more straightforward interpretation of the results.
The tri-factorization based \texttt{NMF} called \texttt{NMTF} is used to extract patterns from relational data~\cite{fu2018matrix}, and is applied in various research areas from modeling topics in text data~\cite{wang2011fast} to discovering disease-disease associations~\cite{vzitnik2013discovering}. \texttt{Fast-NMTF}~\cite{vcopar2019fast} is a version of \texttt{NMTF} that uses faster training algorithms based on projected gradients, coordinate descent, and alternating least squares optimization.
One of the usual applications of \texttt{NMTF} is in data fusion methods. \texttt{DFMF}~\cite{datafusion} is a variant of penalized matrix tri-factorization for data fusion, which simultaneously factorizes data matrices in standard linear algebra to reveal hidden associations.

In the field of tropical matrix factorization, De Schutter \& De Moor in 1997~\cite{tropical} presented a heuristic algorithm \texttt{TMF} to compute factorization of a matrix over the tropical semiring. The 
\texttt{STMF} method~\cite{stmf} is based on \texttt{TMF}, but it can perform matrix completion over the tropical semiring. With \texttt{STMF}, we have shown that tropical operations can discover patterns that cannot be revealed with standard linear algebra. \texttt{FastSTMF}~\cite{faststmf} is an efficient version of \texttt{STMF}, where we introduce a faster way of updating factor matrices. The main advantage of \texttt{FastSTMF} over \texttt{STMF} is better computational performance since it achieves better results with less computation. Both \texttt{STMF} and \texttt{FastSTMF} showed the ability to outperform \texttt{NMF} in achieving higher distance correlation and smaller prediction error. However, \texttt{NMF} still achieves better results in terms of approximation error on the train set. 

We can also use matrix factorization to solve different network optimization problems. The Floyd–Warshall algorithm~\cite{hougardy2010floyd} for shortest paths can be formulated as a computation over a $(\min, +)$ semiring. Hook~\cite{linear_regression}, in his work of linear regression over the tropical semiring, showed how a $(\min, +)$ semiring can be used for the low-rank matrix approximation to analyze the structure of a network. The basis of this approach is a two-factorization algorithm that can recover the edge lengths of the shortest path distances for tripartite and bipartite networks. Network partitioning can be done using the algorithm for community detection called the Louvain method~\cite{blondel2008fast}. Another interesting application of semirings is the fact that we can write the Viterbi algorithm~\cite{forney1973viterbi} compactly in a $(\min, +)$ semiring over probabilities~\cite{theodosis2018analysis}.

Currently, no method returns three factorized matrices computed over the tropical semiring. In our work, we propose a first tri-factorization algorithm over the tropical semiring called \texttt{triFastSTMF}, which is based on \texttt{FastSTMF}. To evaluate it empirically, we apply our \texttt{triFastSTMF} to approximate and predict the edge lengths of a four-partition network.

\section{Methods}
\label{sec:methods}
\subsection{Our contribution}

\subsubsection{Semirings $(\max,+)$ and $(\min,+)$}
In a matrix semiring, the operations on the matrices are based on the main operations in the underlying semiring. We denote by $\mathbb{R}_{\max}^{t \times s}$ the set of all matrices with $t$ rows and $s$ columns over $\mathbb{R}_{\max}$ and for a matrix $X \in \mathbb{R}_{\max}^{t \times s}$ we denote its element in the $i$th row and the $j$th column by $X_{ij}$. Moreover, $ \mathbb{R}_{\max}^{t}= \mathbb{R}_{\max}^{t \times 1}$ is the set of all vectors with $t$ components over $\mathbb{R}_{\max}$.
We define the matrix addition over $\mathbb{R}_{\max}$ as
\begin{equation*}
    (A \oplus B)_{ij} = A_{ij} \oplus B_{ij} = \max\{A_{ij}, B_{ij}\},
\end{equation*}
for all $A, B \in \mathbb{R}_{\max}^{m \times n}$, $i=1,...,m$ and $j=1,...,n$, and the matrix multiplication as
\begin{equation*}
    (A \otimes B)_{ij} = \bigoplus\limits_{k=1}^{p} A_{ik} \otimes B_{kj} = \max\limits_{1 \leq k \leq p}\{A_{ik} + B_{kj}\},
\end{equation*}
 for $A \in \mathbb{R}_{\max}^{m \times p}$ and $B \in \mathbb{R}_{\max}^{p \times n}$. Similarly, in the $(\min,+)$ semiring, the matrix addition is defined as
\begin{equation*}
    (A \oplus^* B)_{ij} = A_{ij} \oplus^* B_{ij} = \min\{A_{ij}, B_{ij}\}
\end{equation*}
 for all $A, B \in \mathbb{R}_{\min}^{m \times n}$, $i=1,...,m$ and $j=1,...,n$, and the matrix multiplication is defined as
\begin{equation*}
    (A \otimes^* B)_{ij} = \bigoplus\limits_{k=1}^{p} A_{ik} \otimes^* B_{kj} = \min\limits_{1 \leq k \leq p}\{A_{ik} + B_{kj}\},
\end{equation*}
 for $A \in \mathbb{R}_{\min}^{m \times p}$ and $B \in \mathbb{R}_{\min}^{p \times n}$ for $i=1,...,m$ and $j=1,...,n$. 

We say that matrix $A$ is less than or equal to matrix $B$, denoted as $A \preceq B$, if every element in $A$ is less than or equal to its corresponding element in $B$.
For given matrices $A \in {\mathbb R}_{\max}^{m \times n}$ and \mbox{$B \in {\mathbb{R}}_{\max}^{m \times p}$}, the solutions of matrix equation  
\begin{equation}\label{eq:lineq}
    A \otimes X=B
\end{equation} do not need to exist. However, there might exist some matrices $X'\in {\mathbb{R}}_{\max}^{n \times p}$, such that $A \otimes X'\preceq B$. Such $X'$ is called a \emph{subsolution} of the equation \eqref{eq:lineq}. The \emph{greatest subsolution} of \eqref{eq:lineq} is  a matrix $X_0\in {\mathbb{R}}_{\max}^{n \times p}$, such that $A\otimes X_0 \preceq B$ and for any matrix $X'$, satisfying $A\otimes X' \preceq B$ we have $X' \preceq X_0$.

It is well known (see, \textit{e.g.}~\cite{gaubert}) that for $A \in {\mathbb R}_{\max}^{m \times n}$ and $\bfb=[b_1\, b_2\, \ldots b_m]^T\in  {\mathbb{R}}_{\max}^{m}$, the greatest subsolution $\bfx=[x_1\, x_2\, \ldots x_n]^T\in {\mathbb{R}}_{\max}^{n}$ of $$A\otimes \bfx = \bfb$$ exists and is given by \begin{equation*}
    x_k = -\max_{1\leq \ell \leq m}\{-b_\ell + A_{\ell k}\}=\min_{1\leq \ell \leq m}\{ -A^T_{k \ell}+b_\ell\},
\end{equation*}
for $k=1,\ldots,n$, or equivalently 
\begin{equation*}
    \bfx =-A^T \otimes^* \bfb.
\end{equation*}
More generally, for matrix equations the greatest subsolution is given by the following theorem.

\begin{theorem}[Described by Gaubert and Plus~\cite{gaubert}]\label{thm:GSS}
For any $A \in {\mathbb{R}}_{\max}^{m \times n}$ and $B \in {\mathbb{R}}_{\max}^{m \times p}$ the greatest subsolution of the equation \mbox{$A \otimes X=B$} is 
$$X= (-A)^T \otimes^* B.$$
\end{theorem}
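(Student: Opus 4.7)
The plan is to reduce the matrix statement to the vector version that precedes the theorem, by decomposing the equation $A\otimes X=B$ column by column. The key observation is that both the tropical product $\otimes$ and the $(\min,+)$ product $\otimes^*$ act on matrices by operating independently on the columns of the right-hand factor: the $j$th column of $A\otimes X$ equals $A\otimes X_{:,j}$, and similarly the $j$th column of $(-A)^T\otimes^* B$ equals $(-A)^T\otimes^* B_{:,j}$, where $X_{:,j}$ and $B_{:,j}$ denote the $j$th columns.

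First I would verify that $A\otimes X\preceq B$ holds if and only if $A\otimes X_{:,j}\preceq B_{:,j}$ for every $j=1,\ldots,p$. This is immediate from the definition of $\preceq$ (entrywise ordering) together with the column-wise nature of tropical matrix multiplication. Consequently, the set of matrix subsolutions decomposes as a Cartesian product (over $j$) of the sets of vector subsolutions $\bfx$ of $A\otimes \bfx\preceq B_{:,j}$, because different columns of $X$ do not interact in the inequality.

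Next I would apply the stated vector result to each column: for each $j$ the greatest $\bfx\in\mathbb{R}_{\max}^{n}$ with $A\otimes \bfx\preceq B_{:,j}$ exists and equals $(-A)^T\otimes^* B_{:,j}$. Assembling these columns into a single matrix $X_0$ gives $A\otimes X_0\preceq B$, and for any other matrix $X'$ with $A\otimes X'\preceq B$, applying the vector maximality in each column yields $X'_{:,j}\preceq (X_0)_{:,j}$, which is exactly the entrywise inequality $X'\preceq X_0$. Finally, identifying the $j$th column of $X_0$ with the $j$th column of $(-A)^T\otimes^* B$ closes the argument and yields the desired closed form.

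The main obstacle is mostly bookkeeping rather than a genuine difficulty: one must be careful that the notion of ``greatest'' in the matrix setting (entrywise on the whole of $X$) really does reduce to ``greatest'' on each column independently, which is why the decoupling across columns is the central step. Once this decoupling is justified by the column-wise formulas for $\otimes$ and $\otimes^*$, the theorem follows as a direct application of the vector version already recorded in the excerpt.
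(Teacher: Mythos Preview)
The paper does not actually prove this theorem: it is stated as a result from Gaubert and Plus~\cite{gaubert}, with only the vector case recorded beforehand as ``well known.'' There is therefore no paper proof to compare against.

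Your argument is correct and is the natural way to pass from the vector statement to the matrix one. The two facts you rely on are (i) that $(A\otimes X)_{:,j}=A\otimes X_{:,j}$ and $((-A)^T\otimes^* B)_{:,j}=(-A)^T\otimes^* B_{:,j}$, which are immediate from the definitions of $\otimes$ and $\otimes^*$, and (ii) that the entrywise order $\preceq$ on $\mathbb{R}_{\max}^{n\times p}$ is the product of the columnwise orders, so a greatest subsolution in the matrix sense exists and is obtained column by column. Both are unproblematic, and the assembly step yields exactly $X_0=(-A)^T\otimes^* B$. Nothing is missing; this is precisely how one would flesh out the reference if a proof were required.
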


\noindent In what follows, we need to include both operations $\otimes$ and $\otimes^*$ in our computations. First, we prove the following technical lemma.

\begin{lemma}\label{lem:max_min_mult}
For any $A \in {\mathbb{R}}_{\max}^{m \times n}$, $B \in {\mathbb{R}}_{\max}^{n \times p}$ and $C \in {\mathbb{R}}_{\max}^{p \times q}$ we have 
$$(A \otimes B) \otimes^* C= A \otimes (B \otimes^*C)$$ and 
$$(A \otimes^* B) \otimes C= A \otimes^* (B \otimes C).$$
\end{lemma}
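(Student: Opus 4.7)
The plan is to verify both identities entrywise: I unfold each side using the scalar definitions of $\otimes$ and $\otimes^*$, reduce the claim to a purely scalar identity in nested $\min$ and $\max$, and verify it there. I focus on the first equation $(A\otimes B)\otimes^* C = A\otimes(B\otimes^* C)$; the second is handled by the symmetric template, with $\max$ and $\min$ (equivalently $\otimes$ and $\otimes^*$) swapping roles.

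Fixing indices $i$ and $j$ and unfolding the LHS gives
$$((A\otimes B)\otimes^* C)_{ij} = \min_k\{\max_\ell\{A_{i\ell}+B_{\ell k}\}+C_{kj}\} = \min_k\max_\ell\{A_{i\ell}+B_{\ell k}+C_{kj}\},$$
where the second equality uses that $C_{kj}$ does not depend on $\ell$ and can therefore be absorbed into the inner max. A symmetric unfolding of the RHS, this time pulling $A_{i\ell}$ (which does not depend on $k$) inside the inner min, yields
$$(A\otimes(B\otimes^* C))_{ij} = \max_\ell\min_k\{A_{i\ell}+B_{\ell k}+C_{kj}\}.$$
So the whole lemma reduces to a purely scalar claim: $\min_k\max_\ell F(\ell,k) = \max_\ell\min_k F(\ell,k)$ for the specific array $F(\ell,k)=A_{i\ell}+B_{\ell k}+C_{kj}$.

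The hard part is exactly this min/max exchange: in general one has only the weak-duality inequality $\max_\ell\min_k F\leq\min_k\max_\ell F$. I would therefore aim to exhibit a saddle point, i.e.\ indices $\ell^*$ and $k^*$ satisfying $F(\ell,k^*)\leq F(\ell^*,k^*)\leq F(\ell^*,k)$ for all $\ell,k$, which simultaneously pins down both one-sided bounds. The structural feature to exploit is that $F$ decomposes as an $\ell$-only term $A_{i\ell}$, a $k$-only term $C_{kj}$, and an interaction term $B_{\ell k}$; the natural strategy is to pick $\ell^*$ achieving the outer max in $\min_k\max_\ell F$, then choose $k^*$ as a minimizer of $B_{\ell^* k}+C_{kj}$, and argue that this pair is a saddle using the separability of the non-interaction terms. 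The second identity then follows by the entirely parallel computation, reducing to $\max_k\min_\ell F = \min_\ell\max_k F$, which is handled by the same saddle-point argument with the roles of $\min$ and $\max$ reversed.
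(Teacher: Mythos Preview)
Your reduction is exactly the paper's: unfold both sides entrywise and reduce to $\min_k\max_\ell F(\ell,k)=\max_\ell\min_k F(\ell,k)$ for $F(\ell,k)=A_{i\ell}+B_{\ell k}+C_{kj}$. The paper simply asserts this exchange without comment; you correctly flag that only the weak-duality inequality $\max_\ell\min_k F\le\min_k\max_\ell F$ holds in general, and propose to close the gap by exhibiting a saddle point. This cannot succeed, because the identity---and hence the lemma as stated---is false. Take $m=q=1$, $n=p=2$, $A=(0,\,0)$, $C=(0,\,0)^T$, and $B=\bigl(\begin{smallmatrix}0&1\\1&0\end{smallmatrix}\bigr)$. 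Then $A\otimes B=(1,\,1)$ gives $(A\otimes B)\otimes^* C=1$, whereas $B\otimes^* C=(0,\,0)^T$ gives $A\otimes(B\otimes^* C)=0$. Your ``separability'' heuristic provides no traction: the $\ell$-only and $k$-only terms $A_{i\ell}$ and $C_{kj}$ can be set to zero, leaving the interaction term $B_{\ell k}$ a completely arbitrary array, so the claim collapses to the unrestricted $\min\max=\max\min$ assertion, for which no saddle point need exist.

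What does hold is the one-sided inequality $A\otimes(B\otimes^* C)\preceq(A\otimes B)\otimes^* C$, and dually $(A\otimes^* B)\otimes C\preceq A\otimes^*(B\otimes C)$, both immediate from weak duality. If you trace the lemma's only use, in the proof of Theorem~\ref{thm:gss}, the argument there can be repaired without the lemma at all, using just monotonicity of $\otimes$ together with Theorem~\ref{thm:GSS}: from $A\otimes((-A)^T\otimes^* D)\preceq D$ with $D=C\otimes^*(-B)^T$ one gets $A\otimes X_0\preceq C\otimes^*(-B)^T$, and then $(A\otimes X_0)\otimes B\preceq(C\otimes^*(-B)^T)\otimes B\preceq C$. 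So the downstream result survives, but neither the paper's asserted swap nor your proposed saddle-point route can establish the lemma as written.
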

\begin{proof}
 For any $k \in \{1,2,\ldots,m\}$ and $\ell \in \{1,2,\ldots,q\}$ we have
 \begin{align*}
    ((A \otimes B) \otimes^* C)_{k \ell}&=\min_{1 \leq i \leq p}\{(A\otimes^* B)_{ki}+C_{i \ell}\}=\\
      &=\min_{1 \leq i \leq p}\max_{1 \leq j \leq n}\{A_{kj}+B_{ji} + C_{i \ell}\}=\\
      &=\max_{1 \leq j \leq n}\min_{1 \leq i \leq p}\{A_{kj}+B_{ji} + C_{i \ell}\}=\\
     &=\max_{1 \leq j \leq n}\{A_{kj}+(B\otimes^* C)_{j \ell}\}=\\
     &=(A \otimes (B \otimes^*C))_{k \ell},
 \end{align*}
 which proves the first equality. Similarly, we prove the second one.
\end{proof}

To implement a tropical matrix tri-factorization algorithm, we need to know how to solve tropical linear systems. In particular, we need to find the greatest subsolution of the linear system 
$A \otimes X \otimes B = C$.

\begin{theorem}\label{thm:gss}
For any $A \in {\mathbb{R}}_{\max}^{m \times n}$, $B \in {\mathbb{R}}_{\max}^{p \times q}$ and $C \in {\mathbb{R}}_{\max}^{m \times q}$ the $n \times p$ matrix
\begin{equation*}
X = (-A)^{T} \otimes^{*} C \otimes^{*} (-B)^{T} 
\end{equation*}
is the greatest subsolution of the equation 
\begin{equation}
    A \otimes X \otimes B = C.
    \label{mixed_equation}
\end{equation}
\end{theorem}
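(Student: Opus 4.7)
The plan is to reduce the two-sided equation $A \otimes X \otimes B = C$ to two successive applications of the one-sided result in Theorem~\ref{thm:GSS}. The crucial observation is that Theorem~\ref{thm:GSS} can be read as an ``iff'' statement: because $\otimes$ is monotone in each argument, the inequality $A \otimes Y \preceq C$ holds precisely when $Y \preceq (-A)^T \otimes^* C$. Transposing the statement of Theorem~\ref{thm:GSS} (together with the identity $(P \otimes^* Q)^T = Q^T \otimes^* P^T$, which follows directly from the definition of $\otimes^*$) yields the analogous right-multiplication version: $Z \otimes B \preceq D$ if and only if $Z \preceq D \otimes^* (-B)^T$.

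Armed with these two equivalences, I would apply them in sequence. Setting $Y = X \otimes B$, the inequality $A \otimes X \otimes B \preceq C$ becomes $X \otimes B \preceq (-A)^T \otimes^* C$, and then, with $D = (-A)^T \otimes^* C$, the second equivalence gives $X \preceq (-A)^T \otimes^* C \otimes^* (-B)^T$. Denoting this right-hand side by $X_0$, the chain of equivalences says simultaneously that $X_0$ is itself a subsolution of \eqref{mixed_equation} (take $X = X_0$) and that every subsolution is bounded above by $X_0$, which is exactly the definition of the greatest subsolution.

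The main obstacle I anticipate is justifying the right-multiplication analogue of Theorem~\ref{thm:GSS} cleanly; this is where Lemma~\ref{lem:max_min_mult} becomes useful, either to verify the transposition identity or to bypass it entirely. In the latter route, one verifies $A \otimes X_0 \otimes B \preceq C$ by direct computation: the lemma lets us rewrite $A \otimes \bigl((-A)^T \otimes^* C \otimes^* (-B)^T\bigr) \otimes B$ as $(A \otimes (-A)^T) \otimes^* C \otimes^* ((-B)^T \otimes B)$, after which one observes that both $A \otimes (-A)^T$ and $(-B)^T \otimes B$ have zero diagonal entries, so $\otimes^*$-multiplication of $C$ by either of them on the appropriate side can only decrease the entries. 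Maximality would then follow from a single application of Theorem~\ref{thm:GSS} to $A \otimes (X \otimes B) \preceq C$ combined with the first equivalence above.
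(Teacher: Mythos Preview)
Your proposal is correct and follows essentially the same strategy as the paper: reduce \eqref{mixed_equation} to two successive one-sided residuations via Theorem~\ref{thm:GSS}. The paper's maximality argument is exactly your chain $A\otimes X'\otimes B\preceq C \Rightarrow X'\otimes B\preceq (-A)^T\otimes^*C \Rightarrow X'\preceq X_0$; for the ``$X_0$ is a subsolution'' part the paper takes what you call the alternative route, invoking Lemma~\ref{lem:max_min_mult} together with the inequalities $A\otimes((-A)^T\otimes^*C)\preceq C$ and $(C\otimes^*(-B)^T)\otimes B\preceq C$ directly, rather than reading it off from the biconditional.
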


\begin{proof}
Observing the equation $A \otimes Y = C$, its greatest subsolution is by Theorem \ref{thm:GSS} equal to $Y' = (-A)^{T} \otimes^{*} C$, implying
\begin{equation}
    A \otimes ((-A)^{T} \otimes^{*} C)=A \otimes Y' \preceq C.
    \label{first_inequality}
\end{equation}
Moreover, if any matrix $Y''$ satisfies the inequality $A \otimes Y'' \preceq C$, this implies that $Y''\preceq (-A)^T \otimes^* C$. Similarly, the greatest subsolution of the equality $Z \otimes B = C $ is by Theorem \ref{thm:GSS} equal to $Z' = C \otimes^{*} (-B)^{T}$, thus
\begin{equation}
    (C \otimes^{*} (-B)^{T}) \otimes B=Z' \otimes B \preceq C,
    \label{second_inequality}
\end{equation}
and if any matrix $Z''$ satisfies the inequality $Z'' \otimes B \preceq C $, this implies that $Z''\preceq C \otimes^{*} (-B)^{T}$. 

Define $X_0=(-A)^{T} \otimes^{*} C \otimes^{*} (-B)^{T}$. Using equations \eqref{first_inequality}, \eqref{second_inequality} and Lemma \ref{lem:max_min_mult} observe that 
\begin{align*}
    A \otimes X_0 \otimes B &= A \otimes ((-A)^{T} \otimes^{*} C \otimes^{*} (-B)^{T}) \otimes B =\\
    &=(A \otimes (-A)^{T} \otimes^{*} C) \otimes^{*} (-B)^{T} \otimes B\preceq \\
    &\preceq C \otimes^{*} (-B)^{T} \otimes B \preceq C,
\end{align*}
which implies that $X_0=(-A)^{T} \otimes^{*} C \otimes^{*} (-B)^{T}$ is the subsolution of equation~\eqref{mixed_equation}. 

Assume now there exists a subsolution  $X'$ of  \eqref{triFastSTMF}, \textit{i.e.}, $$A \otimes X' \otimes B \preceq C.$$ Let us prove that $X' \preceq X_0$, which will imply that $X_0$ is the greatest subsolution of equation \eqref{triFastSTMF}. Since $X' \otimes B$ is the subsolution of the equation $A \otimes Y=C$, it follows that \mbox{$X' \otimes B \preceq (-A)^T\otimes^* C$}. This implies $X'$ is the subsolution of the equation $Z \otimes B = (-A)^T\otimes C$, which assures that 
\begin{equation*}
X' \preceq (-A)^T\otimes^* C \otimes^* (-B)^T=X_0. \qedhere
\end{equation*}
\end{proof}

\subsubsection{Tri-factorization over the tropical semiring}

We propose a tri-factorization algorithm \texttt{triFastSTMF} over the tropical semiring, which returns three factorized matrices that we later use for the analysis of the structure of four-partition networks.

\emph{Matrix tri-factorization over a tropical semiring} is a decomposition of a form
\mbox{$R= G_1 \otimes S \otimes G_2$}, where
\mbox{$R \in \mathbb{R}_{\max}^{m \times n}$}, $G_1 \in \mathbb{R}_{\max}^{m \times r_1}$,  $S \in \mathbb{R}_{\max}^{r_1 \times r_2}$, $G_2 \in \mathbb{R}_{\max}^{r_2 \times n}$, $r_1 \in \mathbb{N}_0$ and $r_2 \in \mathbb{N}_0$. 
Since for small values of $r_1$ and $r_2$ such decomposition may not exist, we define the tropical matrix tri-factorization problem as:
Given a matrix $R$ and factorization ranks $r_1$ and $r_2$, find matrices $G_1, S$ and $G_2$ such that 
\begin{equation}
    \label{eq:3factorization}
    R\cong G_1 \otimes S \otimes G_2.
\end{equation}

Because the solution of equation \eqref{eq:3factorization} does not exist in general, we will evaluate the computed tri-factorization by \emph{$b$-norm}, defined as $\|W\|_b=\sum_{i,j}\vert W_{ij} \vert$. In particular, we want to minimize the cost function
\begin{equation*}
        J(G; S) = \norm{R - G_1 \otimes S \otimes G_2}_b. 
\end{equation*}{}

In Algorithm~\ref{triFastSTMF}, we present the pseudocode of the algorithm \texttt{triFastSTMF} illustrated in Figure~\ref{schema}. 
The convergence of the proposed algorithm \texttt{triFastSTMF}, defined in Algorithm~\ref{triFastSTMF}, is checked similarly to that of \texttt{STMF}~\cite{stmf} and \texttt{FastSTMF}~\cite{faststmf}.
The factor matrices are updated only if the $b$-norm decreases, ensuring that the approximation error is monotonically reduced.

The \texttt{triFastSTMF} method consists of the following steps:

\begin{figure*}[thp]
\centering
\includegraphics[width=0.55\linewidth]{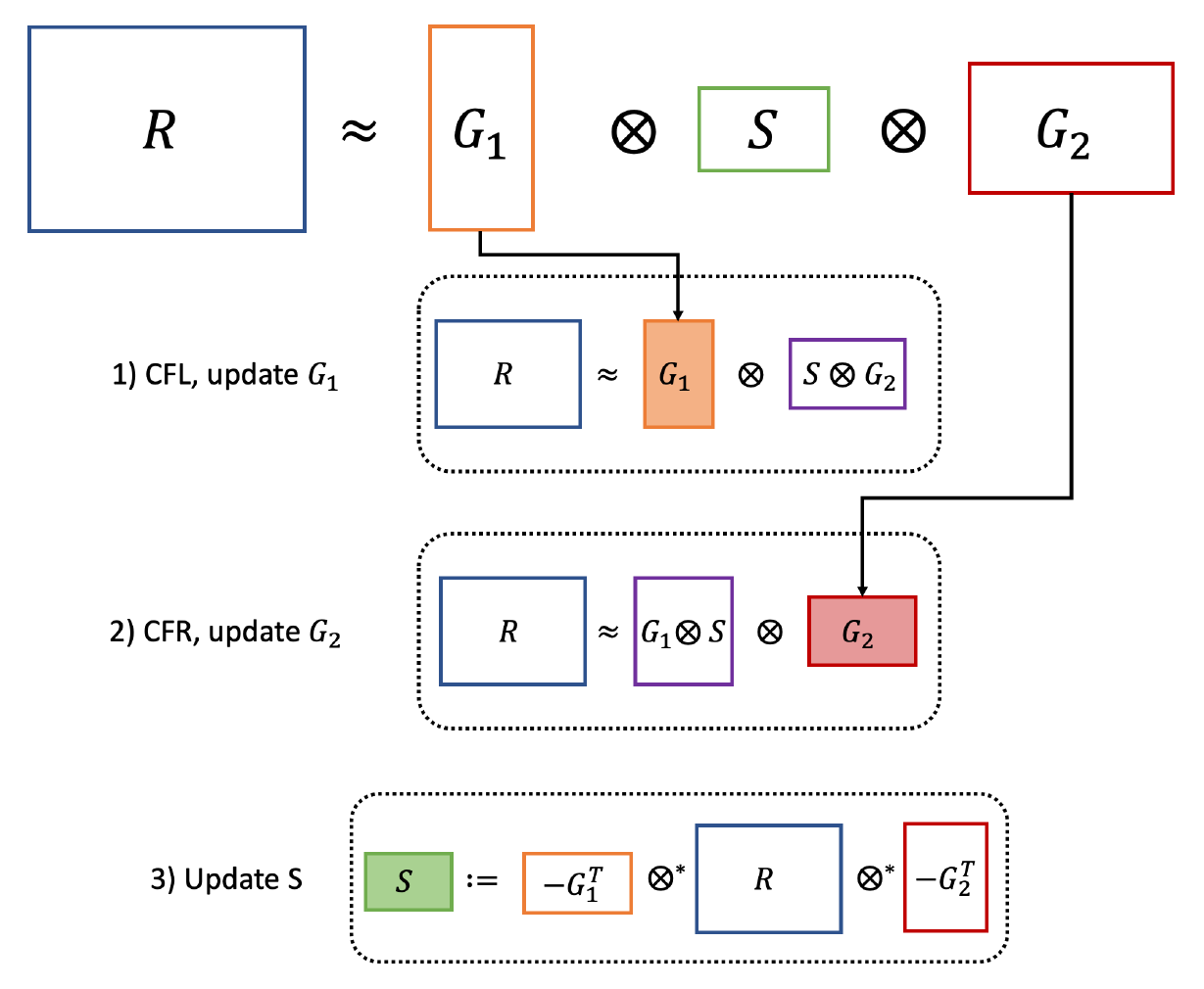}
\caption{Schematic diagram of one iteration of the proposed \texttt{triFastSTMF} method for updating factor matrices $G_1, S$ and $G_2$ of the data matrix $R\approx G_1\otimes S\otimes G_2$.
Step 1) updates the factor matrix $G_1$ through \texttt{CFL}, while step 2) uses the new $G_1$ to update $G_2$ through \texttt{CFR}. The last step, 3) updates $S$ using Theorem~\ref{thm:gss} and newly-computed factor matrices $G_1$ and $G_2$. The procedure repeats until convergence.}
\label{schema}
\end{figure*}

\begin{enumerate}
    \item We follow the results obtained in~\cite{faststmf} to preprocess a data matrix into a suitable shape using transformations, like matrix transposition and random permutation of rows.
    Wide matrices are shown to achieve smaller errors compared to tall matrices~\cite{faststmf}.
    \item The default initialization of factor matrices $G_1$, $S$ and $G_2$ uses the Random Acol strategy~\cite{stmf}, which computes the element-wise average of randomly selected columns from matrix $R$. Fixed initialization for matrices  $G_1$, $S$, and $G_2$ can be used straight from the data, see Section~\ref{subsection:real_data}. \item Until converged, each iteration of the algorithm first updates $G_1$ and $G_2$ using \texttt{CFL} and \texttt{CFR}, presented in Algorithms~\ref{CFL} and~\ref{CFR}, respectively, and described below.
Then we compute the middle factor $S$ as the greatest subsolution of equation $G_1 \otimes S \otimes G_2 = R$ by Theorem~\ref{thm:gss} as
\begin{equation*}
    S = (-G_1)^{T} \otimes^{*} R \otimes^{*} (-G_2)^{T}.
    \end{equation*}
    \item As the last step of \texttt{triFastSTMF}, we reshape the factor matrices $G_1$, $S$ and $G_2$ into appropriate forms depending on the initial transformation of the data matrix $R$. 
    If some of the elements of the data matrix $R$ are not given, we apply the operations proposed in~\cite{stmf} to skip all the missing values in the calculation.
\end{enumerate}

\begin{algorithm}[t]
\caption{Tri-factorization over the tropical semiring (\texttt{triFastSTMF})}
\begin{algorithmic}
\REQUIRE data matrix $R$ $\in \mathbb{R}_{\max}^{m \times n}$, approximation ranks $r_1$, $r_2$ 
\ENSURE factorization $G_1\in \mathbb{R}_{\max}^{m \times r_1}$, $S \in \mathbb{R}_{\max}^{r_1 \times r_2}$, $G_2\in \mathbb{R}_{\max}^{r_2 \times n}$
\STATE \textbf{if} $R$ not wide \textbf{then} transpose $R$
\STATE $perm \gets$ random permutation of indices $1 \dots m$
\STATE $R \gets R[perm, :]$
\STATE initialize $G_1, G_2$  \STATE $S \gets (-G_1)^{T} \otimes^{*} R \otimes^{*} (-G_2)^{T}$ 

\WHILE {not converged}
\STATE $G_1 \gets CFL(R, G_1, S, G_2)$

\STATE $G_2 \gets CFR(R, G_1, S, G_2)$

\STATE $S \gets (-G_1)^{T} \otimes^{*} R \otimes^{*} (-G_2)^{T}$

\ENDWHILE
\STATE \textbf{if} $R$ transposed \textbf{then} 
\STATE $(G_1, S, G_2) \gets (G_2^{T}, S^{T}, G_1[perm^{-1}, :]^{T})$
\STATE \textbf{else} $(G_1, S, G_2) \gets (G_1[perm^{-1}, :], S, G_2)$
\RETURN $G_1, S, G_2$
\end{algorithmic}
\label{triFastSTMF}
\end{algorithm}

\begin{algorithm}[t]
\caption{Compute Factorization to update the Left factor matrix $G_1$ (\texttt{CFL})}
\begin{algorithmic}
\REQUIRE data matrix $R$ $\in \mathbb{R}_{\max}^{m \times n}$, factor matrices: left $G_1 \in \mathbb{R}_{\max}^{m \times r_1}$, middle $S \in \mathbb{R}_{\max}^{r_1 \times r_2}$, and right $G_2 \in \mathbb{R}_{\max}^{r_2 \times n}$ 
\ENSURE left factor matrix $G_1 \in \mathbb{R}_{\max}^{m \times r_1}$
\STATE $Q = S \otimes G_2$
\WHILE {not converged}
\STATE \textbf{for} each \textit{row} $i$ of $R$
\begin{ALC@g}
\STATE $err$, $row\_inds$, $col\_inds \gets \text{TD\_A}(R, G_1, Q, i)$
\STATE \textbf{for} each $j$ \textbf{in} argsort(\textit{err}) in decreasing order
\begin{ALC@g}
\STATE $k\gets \argmax_\ell\left({\rm count_\ell}\left(row\_inds \text{ }\cup \break \text{ } col\_inds[j]\right)\right)$
\STATE $(G_1, Q, G_{1(\cdot k)}',  Q_{k \cdot}') \gets \text{F-ULF}(R, G_1, Q, i, j, k)$
\STATE \textbf{if} $\left\lVert R - G_1 \otimes S \otimes G_2 \right\rVert_b$ decreases \textbf{then} \textbf{break}
\STATE \textbf{else} $(G_{1(\cdot k)}, Q_{k \cdot}) \gets (G_{1(\cdot k)}', Q_{k \cdot}')$
\end{ALC@g}
\begin{ALC@g}
\STATE $(G_1, Q, G_{1(\cdot k)}',  Q_{k \cdot}') \gets \text{F-URF}(R, G_1, Q, i, j, k)$
\end{ALC@g}
\begin{ALC@g}
\STATE \textbf{if} $\left\lVert R - G_1 \otimes S \otimes G_2 \right\rVert_b$ decreases \textbf{then} \textbf{break}
\STATE \textbf{else} $(G_{1(\cdot k)}, Q_{k \cdot}) \gets (G_{1(\cdot k)}', Q_{k \cdot}')$
\end{ALC@g}
\end{ALC@g}
\ENDWHILE
\RETURN $G_1$
\end{algorithmic}
\label{CFL}
\end{algorithm}

\begin{algorithm}[t]
\caption{Compute Factorization to update the Right factor matrix $G_2$ (\texttt{CFR})}
\begin{algorithmic}
\REQUIRE data matrix $R$ $\in \mathbb{R}_{\max}^{m \times n}$, factor matrices: left $G_1 \in \mathbb{R}_{\max}^{m \times r_1}$, middle $S \in \mathbb{R}_{\max}^{r_1 \times r_2}$, and right $G_2 \in \mathbb{R}_{\max}^{r_2 \times n}$ 
\ENSURE right factor matrix $G_2 \in \mathbb{R}_{\max}^{r_2 \times n}$
\STATE $Q = G_1 \otimes S$
\WHILE {not converged}
\STATE \textbf{for} each \textit{row} $i$ of $R$
\begin{ALC@g}
\STATE $err$, $row\_inds$, $col\_inds \gets \text{TD\_A}(R, Q, G_2, i)$
\STATE \textbf{for} each $j$ \textbf{in} argsort(\textit{err}) in decreasing order
\begin{ALC@g}
\STATE $k\gets \argmax_\ell\left({\rm count_\ell}\left(row\_inds \text{ }\cup \break \text{ } col\_inds[j]\right)\right)$
\STATE $(Q, G_2, Q_{\cdot k}',  G_{2(k \cdot)}') \gets \text{F-ULF}(R, Q, G_2, i, j, k)$
\STATE \textbf{if} $\left\lVert R - G_1 \otimes S \otimes G_2 \right\rVert_b$ decreases \textbf{then} \textbf{break}
\STATE \textbf{else} $(Q_{\cdot k}, G_{2(k \cdot)}) \gets (Q_{\cdot k}', G_{2(k \cdot)}')$
\end{ALC@g}
\begin{ALC@g}
\STATE $(Q, G_2, Q_{\cdot k}',  G_{2(k \cdot)}') \gets \text{F-URF}(R, Q, G_2, i, j, k)$
\end{ALC@g}
\begin{ALC@g}
\STATE \textbf{if} $\left\lVert R - G_1 \otimes S \otimes G_2 \right\rVert_b$ decreases \textbf{then} \textbf{break}
\STATE \textbf{else} $(Q_{\cdot k}, G_{2(k \cdot)}) \gets (Q_{\cdot k}', G_{2(k \cdot)}')$
\end{ALC@g}
\end{ALC@g}
\ENDWHILE
\RETURN $G_2$
\end{algorithmic}
\label{CFR}
\end{algorithm}

Note that \texttt{triFastSTMF} updates one factor matrix at a time using \texttt{CFL} and \texttt{CFR}, presented in Algorithms~\ref{CFL} and~\ref{CFR}, respectively. 
They are both based on \texttt{FastSTMF} and represent the two-factorization with \texttt{FastSTMF} core~\cite{faststmf} that contains minor changes:
\begin{itemize}
    \item In \texttt{CFL}/\texttt{CFR}, we remove the initialization of the factor matrices, as they are already initialized at the beginning of \texttt{triFastSTMF}. In \texttt{CFL}, we update only the left factor matrix $G_1$, and declare $Q=S \otimes G_2$ to be the second factor matrix. Similarly, in \texttt{CFR}, we update only the right factor matrix $G_2$ and $Q=G_1 \otimes S$ is the first factor matrix. This approach prevents overfitting factor matrices since the optimization iterates over the left and right factorization. Such a process gives equal importance to both factor matrices, allowing patterns to spread in multiple factor matrices instead of being consolidated in one of them. 
    \item We change the computation of the approximation error. \texttt{FastSTMF} computes the error of two-factorization, while \texttt{CFL}/\texttt{CFR} computes the tri-factorization error using the current factor matrices $G_1$, $S$, and $G_2$.
    \item We do not transpose the matrices nor permute the rows of matrices in \texttt{CFL}/\texttt{CFR} since this is performed as part of \texttt{triFastSTMF}.
\end{itemize}

The functions \texttt{F-ULF}, \texttt{F-URF} and \texttt{TD-A} used in \texttt{CFL} and \texttt{CFR} are the same as in the \texttt{FastSTMF} algorithm~\cite{faststmf}. We present the pseudocode of \texttt{TD-A} in Algorithm~\ref{TD_A}, where the notation of functions used is given in~\cite{faststmf}.

\begin{algorithm}[H]
\caption{\texttt{TD\_A}}
\begin{algorithmic}
\REQUIRE data matrix $R$  $\in \mathbb{R}_{\max}^{m \times n}$, left factor matrix $U$, right factor matrix $V$, row $i$ of $R$
\ENSURE $errors, row\_indices, column\_indices$
\STATE $row\_indices \gets \{\!\!\{ f(i,t)\colon t=1,\ldots,n \}\!\!\}$
\STATE \textit{errors}, $columns\_indices \gets [\,], [\, ]$
\STATE \textbf{for} each \textit{column} $j$ of $R$
\begin{ALC@g}
\STATE $e \gets \td_{col}(R, U, V, j)$
\STATE \textbf{append} $e$ \textbf{to} \textit{errors}
\STATE $col\_indices \gets \{\!\!\{ f(t,j)\colon t=1,\ldots,m \}\!\!\}$
\STATE \textbf{append} $col\_indices$ \textbf{to} $columns\_indices$
\end{ALC@g}
\RETURN $errors, row\_indices, column\_indices$
\end{algorithmic}
\label{TD_A}
\end{algorithm}

\subsubsection{Different aspects of the tri-factorization on networks}
\label{diff_asp_networks}
The four-partition network shown in Figure~\ref{figure_1} is an illustrative example of where we can apply tri-factorization for network structure analysis. We represent the four-partition network with three factor matrices which is the basis of tri-factorization methods. Further, different approaches to four-partition networks can be used depending on the nature of the data and the task that needs to be solved.

For a network $\Gamma$ with a vertex set 
\begin{align*}
V(\Gamma)=&\{x(i)\colon i=1,\ldots,m\}\cup
        \{y(j)\colon j=1,\ldots,r_1\}\cup\\
        & \{w(k)\colon k=1,\ldots,r_2\}\cup
        \{z(\ell)\colon \ell=1,\ldots,n\}
\end{align*}
and an edge set $E(\Gamma)$, we define a matrix $G_1 \in \mathbb{R}_{\max}^{m \times r_1}$ such that $G_{1(ij)}$ represents the weight on the edge from $x(i)$ to $y(j)$, a matrix $S \in \mathbb{R}_{\max}^{r_1 \times r_2}$ where $S_{jk}$ represents the weight on the edge from $y(j)$ to $w(k)$ and a matrix $G_2 \in \mathbb{R}_{\max}^{r_2 \times n}$ where $G_{2(k\ell)}$ represents the weights of the edges from $w(k)$ to $z(\ell)$. Then $R = G_1 \otimes S \otimes G_2$ is the $m \times n$ matrix such that 
\begin{equation*}
    R_{i\ell} = \max_{1\leq j \leq r_1, 1\leq k \leq r_2} ((G_1)_{ij} + S_{jk} + (G_2)_{k\ell}) \end{equation*}is the length of the longest path from $x(i)$ to $z(\ell)$, see Figure~\ref{figure_1}. If a matrix $R$ is given, we can estimate $G_1, S$ and $G_2$ with \texttt{triFastSTMF}. 

\begin{figure}[htp]
\centering
\includegraphics[width=2.9in]{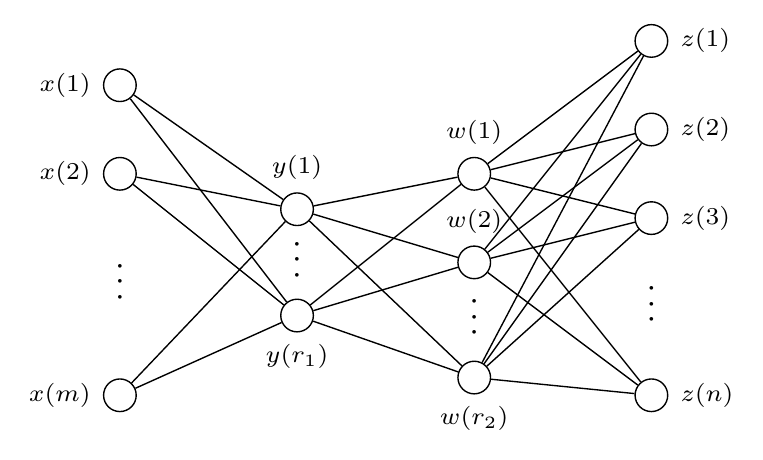}
\caption{Example of a four-partition network.}
\label{figure_1}
\end{figure}

The main question is how to present an arbitrary network as a four-partition network. The two main approaches are:
\begin{itemize}
    \item \textbf{All nodes in the four-partition network are real nodes}. The matrices $G_1, S$, and $G_2$ represent weights of the real edges from the original network, which preserves the interpretability of the network since the relations are only between real nodes. Moreover, the size of the four-partition network remains the same size as the original network. This approach is suitable when the original network's structure already has four partitions. 
    \item \textbf{Some nodes in the four-partition network are latent nodes}. The real nodes are only outer nodes ($x, z$), while latent nodes are inner nodes ($y, w$). In this case, the matrices $G_1, S$ and $G_2$ represent latent features of the outer nodes and not real weights from the original network, leading to a more difficult interpretability of the network since now the relations are also between real and latent nodes. The size of the four-partition network is larger than the size of the original network, which means increases the complexity of the task using this approach.
\end{itemize}
We focus on the first approach, where all nodes in the network are real nodes since we want to use the patterns from the data to initialize the factor matrices, maintain network interpretability, demonstrate how to work with real four-partition networks, and consequently obtain a better approximation of matrices $R, G_1, S, G_2$. In this way, we fully present the power of tri-factorization over two-factorization and its primary purpose.

\subsubsection{Comparison with other strategies}
In our work, we developed different tropical tri-factorization strategies, \texttt{triSTMF} and \texttt{Consecutive}, that are based on two-factorizations~\cite{stmf, faststmf}. We compare their effectiveness with proposed \texttt{triFastSTMF} in Section~\ref{subsubsection_tropmethods}.

The \textbf{triSTMF strategy} is based on the \texttt{TD\_A} method from \texttt{FastSTMF}, and we implement \texttt{triSTMF} tri-factorization as two different two-factorizations: \begin{enumerate}
    \item [\textit{i)}] Left factor matrix is $G_1 \otimes S$, right factor matrix is $G_2$.
    \item [\textit{ii)}] Left factor matrix is $G_1$, right factor matrix is $S \otimes G_2$. 
\end{enumerate} 
We denote errors obtained from \texttt{TD\_A} in the \textit{i)} case as $\varepsilon_L$ and errors in the \textit{ii)} case as $\varepsilon_R$.
We developed two versions called \texttt{triSTMF-BothTD} and \texttt{triSTMF-RandomTD}, which differ in the order of how the error is computed. In \texttt{triSTMF-BothTD}, the computation is performed using both $\varepsilon_L$ and $\varepsilon_R$. The smaller error between $\varepsilon_L$ and $\varepsilon_R$ is selected to perform optimization.
In contrast, \texttt{triSTMF-RandomTD} randomly computes $\varepsilon_L$ or $\varepsilon_R$ and continues with the optimization. Also, \texttt{triSTMF} uses \texttt{ULF} and \texttt{URF} from \texttt{STMF} as the basis for updating factor matrices. Note that we cannot use \texttt{F-ULF} and \texttt{F-URF} directly in the case of tri-factorization since the third factor matrix $S$ introduces additional complexity to \texttt{F-ULF} and \texttt{F-URF}, resulting in incompatible operations. This results in a slow optimization process of both versions of \texttt{triSTMF}.

The \textbf{Consecutive strategy} has two versions: \texttt{lrCon\-se\-cutive} and \texttt{rlConsecutive}.
The goal of this strategy is to achieve tri-factorization by first applying \texttt{FastSTMF} to the data matrix $R$, resulting in factor matrices $U$ and $V$. In the second step, \texttt{lrConsecutive} obtains the third factor matrix by applying \texttt{FastSTMF} to the matrix $V$ to obtain $S$ and $G_2$, while $G_1=U$. In contrast, \texttt{rlConsecutive} applies \texttt{FastSTMF} to the matrix $U$ to obtain $G_1$ and $S$, while $G_2=V$.  The drawback of a consecutive strategy is the consolidation of the patterns in one of the factor matrices during the first step.

\subsection{Synthetic data}
We created a \textit{synthetic data matrix} of size $200 \times 100$ using the $(\max, +)$ multiplication of three random non-negative matrices. Since the purpose of synthetic data is to present the perfect scenario in which the proposed method works the best, we created our synthetic data using three random factor matrices of \textit{sufficiently} large ranks $r_1=25$ and $r_2=20$. We use a synthetic data matrix to compare different tropical matrix factorization methods in Section~\ref{subsubsection_tropmethods}. 
We also created a \textit{synthetic network} with four partitions of sizes $(m, r_1, r_2, n) = (45, 10, 15, 30)$ and use it to analyze four-partition network in Section~\ref{subsubsection:construction}.

\subsection{Real data}
\label{sec:intro_real_data}
We downloaded the real-world interaction dataset of an ant colony~\cite{mersch2013tracking} from the Network Data Repository~\cite{nr-aaai15}. The nodes represent 160 ants, the edges represent physical contact (interaction), and the edge weight is the frequency of interaction during 41 days in total. We preprocessed the network to the appropriate format for evaluation as explained in Section~\ref{subsection:real_data}. In Figure~\ref{figure_real_data}, we show the daily average frequency of interactions between ants. The distance between the nodes indicates the strength of interactions, \textit{i.e.}, nodes are closer when the interaction is stronger; contrary, nodes are farther apart when the interaction is weaker. The outer nodes interact less frequently with the nodes in the center of the network. 
We depict the individual frequency of interactions with the transparency of the edge color in Figure~\ref{figure_real_data}. 
\begin{figure}[!htb]
\centering
\includegraphics[scale=0.6825]{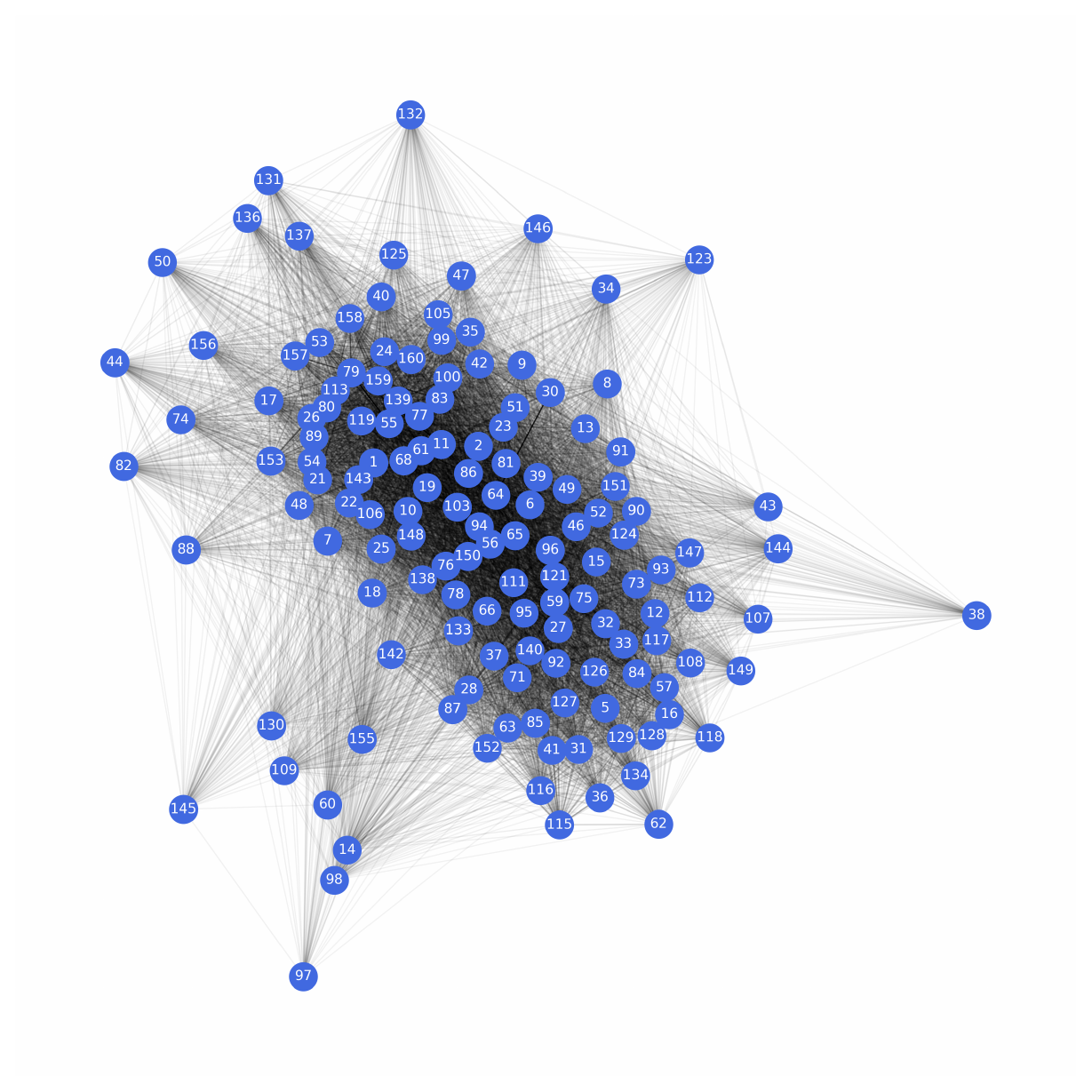}
\caption{A real-world network of the daily average frequency of interactions in an ant colony. The strength of the interaction is visualized with the distance between nodes and edge transparency. 
}
\label{figure_real_data}
\end{figure}

\subsection{Evaluation metrics}
In our work, we use the following metrics:
\begin{itemize}
    \item \textit{Root-mean-square error} or RMSE is a commonly used metric for comparing matrix factorization methods~\cite{faststmf}. We use the  RMSE in our experiments to evaluate the approximation error RMSE-A on the train data, and prediction error RMSE-P on the test data.
    \item \textit{$b$-norm} is defined as $||W||_b=\sum_{i,j}\vert W_{ij} \vert$, and it is used in~\cite{stmf} and~\cite{faststmf} as objective function. We also use the $b$-norm to minimize the approximation error of \texttt{triFastSTMF}.
\item \textit{Rand score} is a similarity measure between two clusterings that considers all pairs of samples and counts pairs assigned in the same or different clusters in the predicted and actual clusterings~\cite{hubert1985comparing}. We use the Rand score to compare different partitioning strategies of the synthetic network.
\end{itemize}

\subsection{Evaluation}
We conducted experiments on synthetic data matrices with true ranks $r_1=25$ and $r_2=20$. The experiments were repeated $25$ times for $300$ seconds using Random Acol initialization.

For the synthetic four-partition network reconstruction, we repeat the experiments $25$ times using fixed initialization with different random and partially-random partitionings. Due to the smaller matrices, these experiments run for $100$ seconds.

For real data, we used the Louvain method~\cite{blondel2008fast} to obtain $r_1$ and $r_2$. Furthermore, we randomly removed at most $20\%$ of the edges. We use fixed initialization and run the experiments for $300$ seconds.

\section{Results}
\label{sec:results}
We perform experiments on synthetic and real data. First, we compare different tropical matrix factorization methods on the synthetic data matrix and show that \texttt{triFastSTMF} achieves the best results of all tropical approaches. Next, we analyze the effect of different partitioning strategies on the performance of \texttt{triFastSTMF}. Finally, we evaluate the proposed \texttt{triFastSTMF} on real data and compare it with \texttt{Fast-NMTF}.

\subsection{Synthetic data}
\subsubsection{Comparison between the tropical matrix factorization methods}
\label{subsubsection_tropmethods}
We experiment with different two-factorization and tri-factorization tropical methods. The set of all tri-factorizations represent a subset of all two-factorizations. Specifically, each tri-factorization is also a two-factorization, meaning that, in general, we cannot obtain better \textit{approximation} results with tri-factorization compared to two-factorization. In Figure~\ref{figure_comp_trop_methods}, we see that the first half of \texttt{lrConsecutive} is better than the second half of \texttt{lrConsecutive}. Namely, in the first half, we perform two-factorization, while in the second half, we factorize one of the factor matrices to obtain three factor matrices as the final result. This second approximation introduces uncertainty and larger errors compared to the first half. We see a similar behavior in \texttt{rlConsecutive}. In this scenario, we show that the two-factorization is better than the tri-factorization. We see that the results of \texttt{triSTMF-BothTD} and \texttt{triSTMF-RandomTD} overlap and do not make any updates during the limited running time since they use slow algorithms to update factor matrices.

Comparing the two-factorization method \texttt{FastSTMF} and the tri-factorization method \texttt{triFastSTMF}, we obtain a similar approximation error in Figure~\ref{figure_comp_trop_methods}. We see that our proposed \texttt{triFastSTMF} achieves the lowest approximation error on the synthetic data matrix of all tested tropical tri-factorization methods. 
\begin{figure}[htp]
\centering
\includegraphics[scale=0.6825]{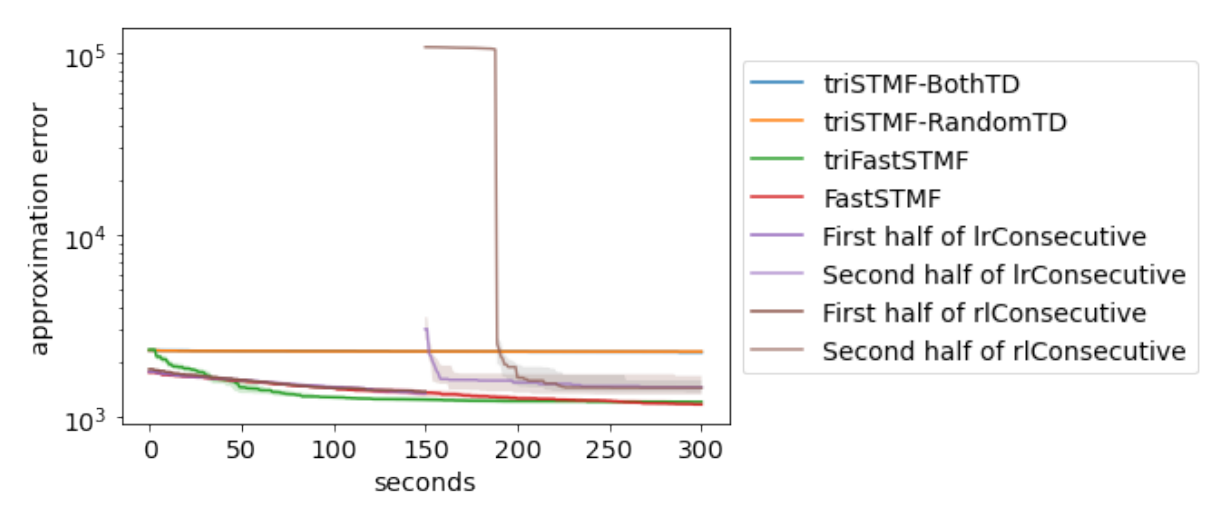}
\caption{Comparison of different tropical tri-factorization methods. The median, first and third quartiles of the approximation error in 25 runs on the synthetic random tropical $200 \times 100$ matrix are shown.}
\label{figure_comp_trop_methods}
\end{figure}
Tri-factorization may outperform two-factorization in a \textit{limited running time} because of the nature of the data and the initialization of factor matrices. Theoretically, we expect that two-factorization and tri-factorization would achieve the same results when evaluated across a large number of datasets.
\noindent Tri-factorization has demonstrated its superiority over two-factorization in many examples. An important application of tri-factorization is the fusion of data from different sources~\cite{datafusion}. In our work, we show that tri-factorization can be applied to approximate and predict weights in four-partition networks.

\subsubsection{Analysis of four-partition network construction}
\label{subsubsection:construction}
We construct a random \textit{tropical} network $K$ of total $100$ nodes with a four-partition  $A\cup B\cup C \cup D$. We denote the sizes of sets $A$, $B$, $C$ and $D$ as $m$, $r_1$, $r_2$ and $n$, respectively, and choose $(m, r_1, r_2, n) = (45, 10, 15, 30)$, see Figure~\ref{figure_synthetic_graph}.
We want to check the robustness of proposed \texttt{triFastSTMF} to the partitioning process and answer the following question: \textit{Is approximation error stable among different choices of partitioning?} 

\noindent Network $K$ contains the following edges:
    \begin{itemize}
        \item edges from $A$ to $B$, denoted as $A-B$, have weights represented by a random matrix $M_1^{m \times r_1}$,
	    \item edges from $B$ to $C$, denoted as $B-C$, have weights represented by a random matrix $T^{r_1 \times r_2}$,
	    \item edges from $C$ to $D$, denoted as $C-D$, have weights represented by a random matrix $M_2^{r_2 \times n}$,	
\item edges from $A$ to $D$, denoted as $A-D$, have weights represented by matrix $E = M_1 \otimes T \otimes M_2$.
    \end{itemize}
We propose the following general algorithm for converting the input network $K$ into a suitable form for tri-factorization. First, partition all network nodes into four sets, $X, Y, W$, and $Z$, with fixed sizes $m, r_1, r_2$ and $n$, respectively, in two ways:
    \begin{itemize}
        \item \textit{random partitioning}: $X\cup Y\cup W\cup Z$ is a random four-partition of the chosen size. Random partitioning 
        is a valid choice when all network nodes represent only one type of object. For example, in a social network, a node represents a person.
\item \textit{partially-random partitioning}: $Y, W$ are random subsets of nodes of $K$ of sizes $r_1$ and $r_2$, while $X=A$ and $Z=D$, where $A, D$ are given. Partially-random partitioning is applicable when there are two types of objects represented in the network. For example, in the movie recommendation system, \textit{users} belong to the set $X$ and \textit{movies} to $Z$. In this case, sets $Y$ and $W$ represent the latent features of $X$ and $Z$.
    \end{itemize}
See examples of random and partially-random partitioning in Figure~\ref{figure_synthetic_graph}, where we show only the edges $X-Y$, $Y-W$ and $W-Z$ to achieve easier readability of the network. Given the (pseudo)random partitioning, construct matrix $R$ as the edges $X-Z$. The matrices $G_1, S$ and $G_2$ are constructed as explained in~\ref{diff_asp_networks} and can be used for the initialization of tri-factorization of $R$ (fixed initialization). For the missing edges, we set the corresponding values in \texttt{triFastSTMF} to be a random number from elements of $G_1, S$ and $G_2$. Tri-factorization on $R$ will return updated $R, G_1, S, G_2$ with approximated/predicted weights on edges.

\begin{figure*}\centering
\includegraphics[width=0.95\linewidth]{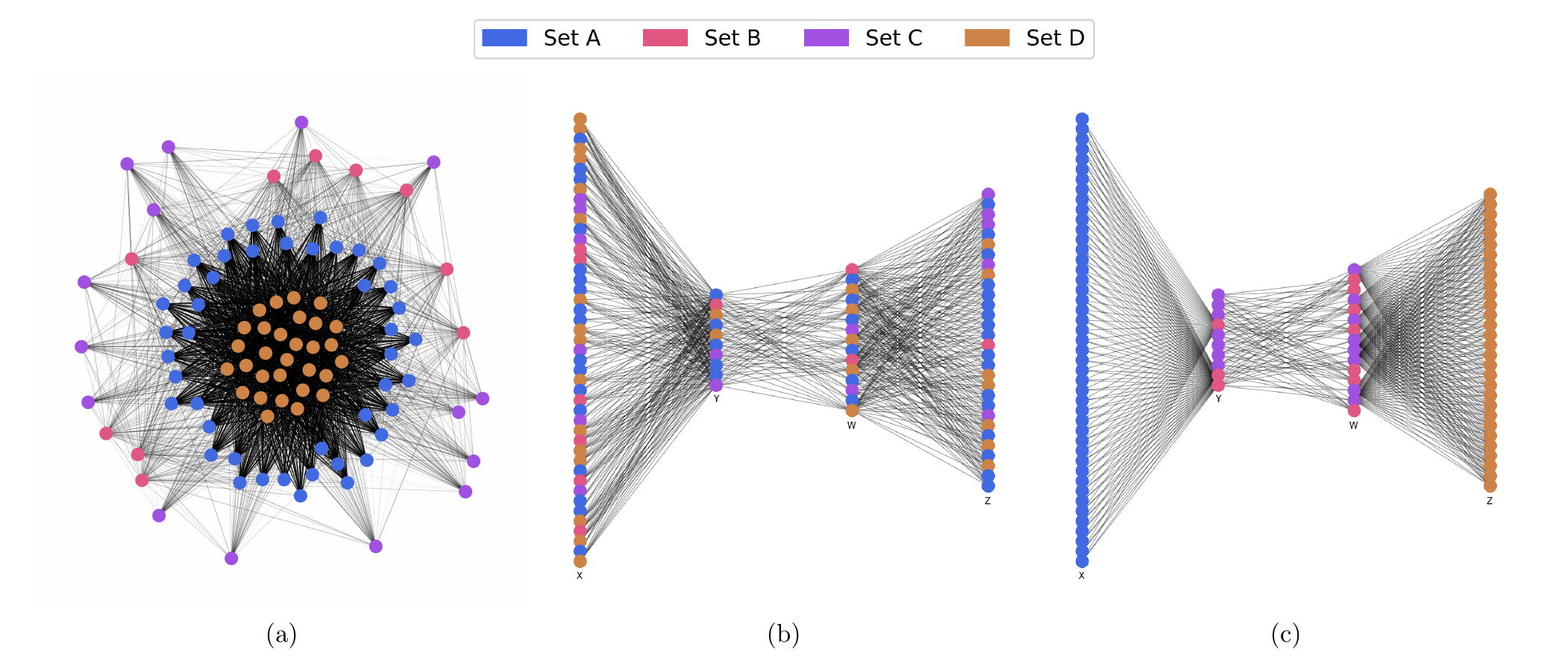}
\caption{(a) A synthetic random \textit{tropical} network $K$ of 100 nodes created by applying the tropical semiring on four sets $A, B, C$ and $D$. The sets $A$ and $D$ are densely connected, following the network construction process. In contrast, sets $B$ and $C$ are less connected. Example of partitioning network $K$, using b) random and c) partially-random partitioning.}
\label{figure_synthetic_graph}
\end{figure*}

We show that partially-random partitioning achieves higher Rand scores, but approximation errors are similar to the ones obtained by random partitioning, see Figure~\ref{rand_score}. We conclude that the partitioning process does not significantly affect the approximation error of \texttt{triFastSTMF}. Still, if there is some additional knowledge about the sets of partition, it is better to use partially-random partitioning. When we do not know the real partition, random partitioning or advanced algorithms, such as the Louvain method, can be used.

\begin{figure}\centering
\includegraphics[scale=0.6825]{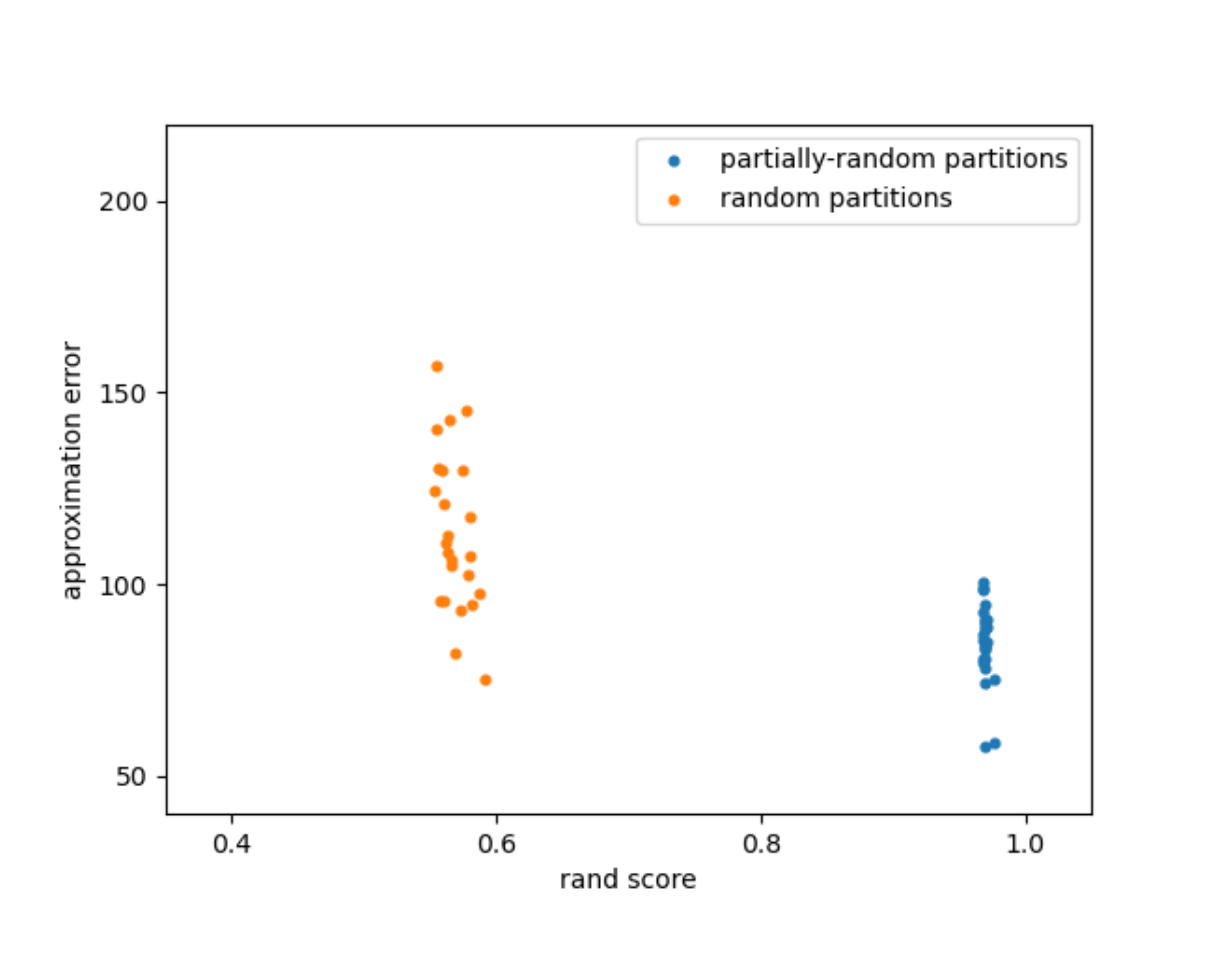}
\caption{Rand score and approximation error of \texttt{triFastSTMF} on 25 random and 25 partially-random partitionings of synthetic data. We performed one run of 100 seconds for each matrix $R$ and used true ranks $r_1$ and $r_2$ as factorization parameters.
}
\label{rand_score}
\end{figure}

\subsection{Real data}
\label{subsection:real_data}

\begin{figure}[!htb]
\centering
\includegraphics[scale=0.6825]{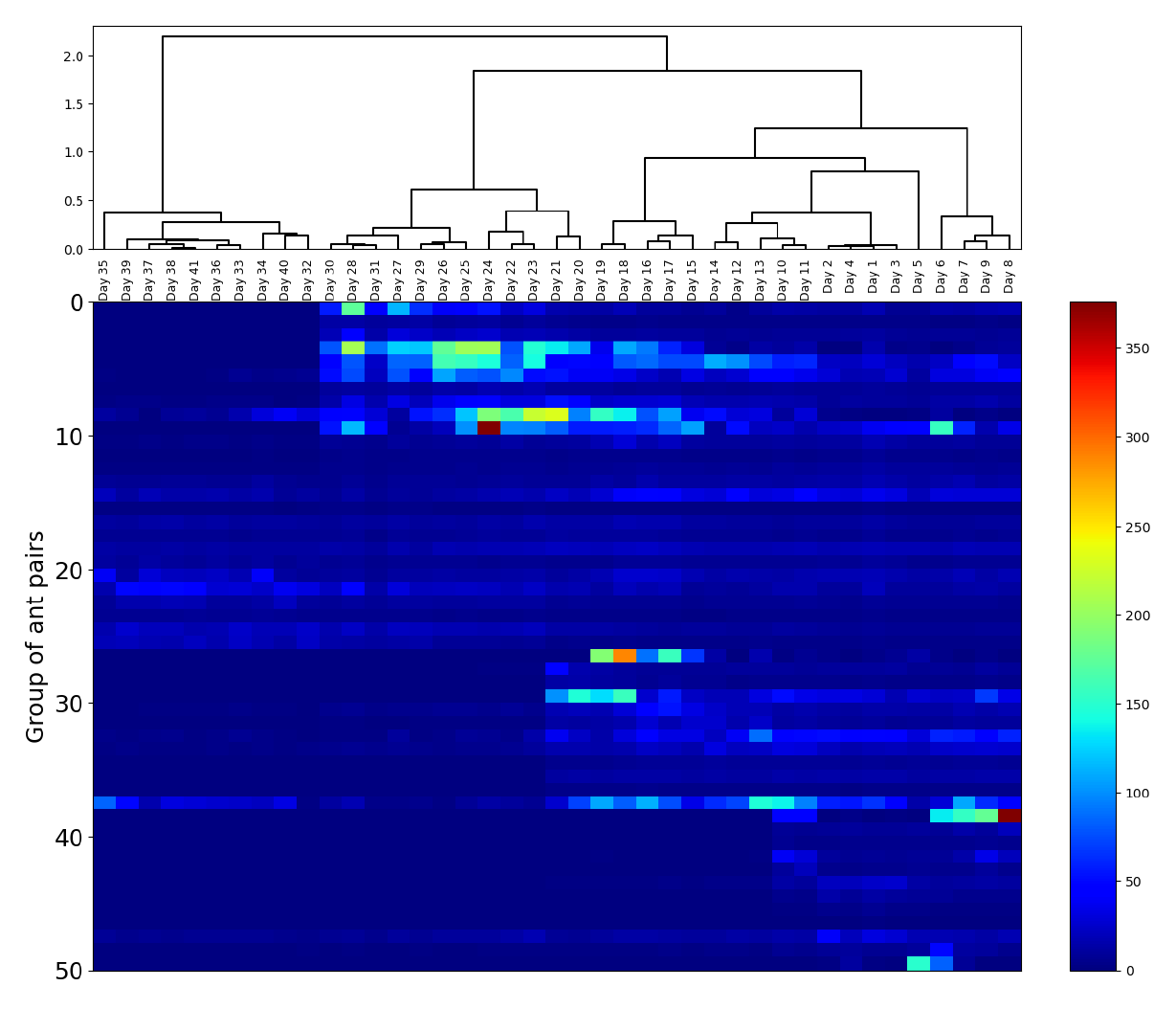}
\caption{Analysis of ants' behavioral patterns over 41 days. The rows represent centroids of clustered ant pairs with k-means using $k=50$, and the columns denote daily interactions. Rows and columns are ordered using Optimal Leaf Ordering for Hierarchical Clustering~\cite{bar2001fast} using cosine distance and Ward linkage.}
\label{figure_heatmap}
\end{figure}

We test our method on a real-world interaction dataset of ant colony introduced in Section~\ref{sec:intro_real_data}.
We describe the data on the interaction between pairs of ants using a weighted adjacency matrix of size $160 \times 160$, where diagonal elements are equal to $0$. The adjacency matrix is symmetric, and we use the data from the upper triangular part to construct the matrix $H$, where each row describes one pair of ants, and columns represent a specific day. Since $H$ is large, we use \textit{k-means} clustering to obtain 50 clusters and analyze the behavioral patterns of the ants on each day, shown in Figure~\ref{figure_heatmap}. 

There are three groups of days with different dynamics of ant interaction: $D_1$ represents days $1-19$, $D_2$ are days $20-31$, and $D_3$ are days $32-41$. We preprocessed the data for each group of days $D_1$, $D_2$ and $D_3$ such that the corresponding weight between two ants represents the daily average of all interactions for the specific days, see Figure~\ref{figure_heatmap_by_days}. 

The group $D_2$, which contains days $20-31$ and $140$ pairs of ants with positive weights, is the most dynamic of the three groups and has local communities. We construct a network $N$ from the group $D_2$, where the nodes represent individual ants, and the weight of the edges represents the strength of interactions between ants. The network density of $N$ is $88\%$. The weighted adjacency matrix of $N$ is denoted as $A$.

\begin{figure*}[!htb]
\centering
\includegraphics[width=0.9\textwidth]{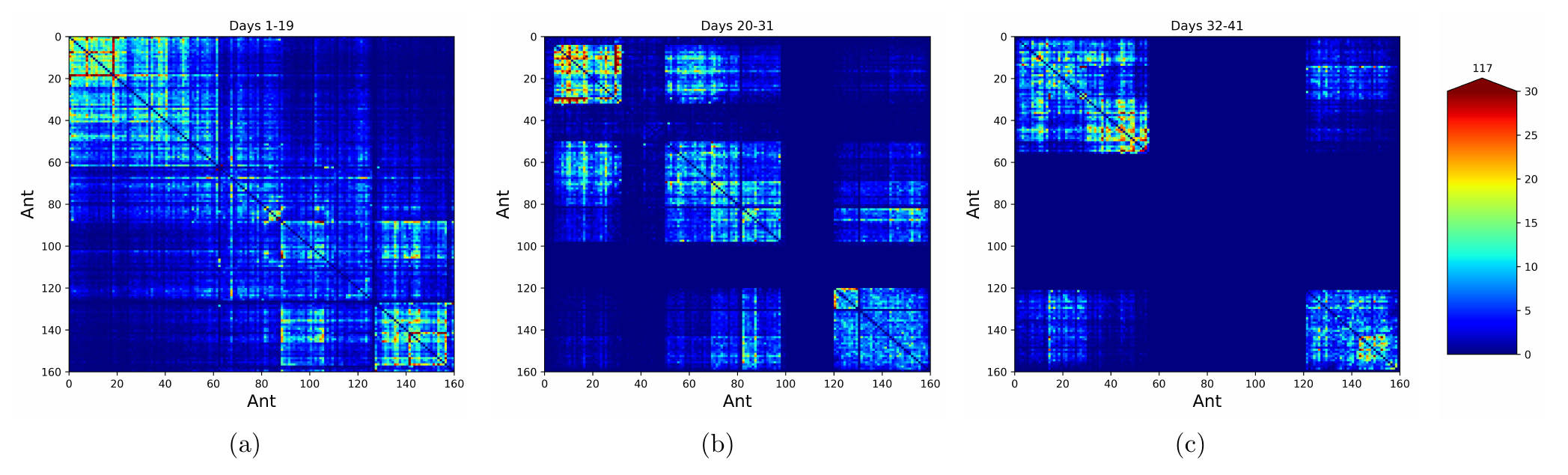}
\caption{Comparison between the daily average of all interactions between ant pairs for different groups of days: (a) days 1-19, (b) days 20-31, and (c) days 32-41. Rows and columns are ordered using Optimal Leaf Ordering for Hierarchical Clustering~\cite{bar2001fast} using cosine distance and Ward linkage.}
\label{figure_heatmap_by_days}
\end{figure*}

Next, we construct ten different networks, $N_1, \dots, N_{10}$ by sampling with replacement the edges from $N$. Each sampled network has at most $20\%$ of missing edges from $N$, which are used for evaluation.
For each network $N_i$, $i \in \{1,\ldots,10\}$, we construct the weighted adjacency matrix $A_i$ with the exact same size and ordering of the nodes in rows and columns as in matrix $A$. Now, to apply tri-factorization on networks, we need to perform Louvain partitioning~\cite{blondel2008fast} for each $N_i$ to obtain a four-partition of its nodes: $X_i\cup Y_i \cup W_i \cup Z_i$.

Louvain method assigns sets of a four-partition and enables favoring larger communities using parameter $\gamma$. Different partitions are obtained for different values of $\gamma$, from which we select a connected four-partition network. We prefer the outer sets $X_i$ and $Z_i$ of corresponding sizes $m$ and $n$, respectively, to have a larger size than the inner sets $Y_i$ and $W_i$ of sizes $r_1$ and $r_2$, respectively. This will ensure that the matrix factorization methods embed data into low-dimensional space using rank values $r_1, r_2 \ll \min\{m, n\}$. Louvain algorithm results in different parameters $m, r_1, r_2$ and $n$ for each $N_i$, $i \in \{1,\ldots,10\}$, shown in Table~\ref{louvain_partitions}. We define $\mu$ to represent a percentage of nodes in outer sets.  Table~\ref{louvain_partitions} shows that $\mu \geq 74\%$ for all $N_i$. We construct $R_i$ matrices of corresponding sizes $m \times n$ using edges from $X_i$ to $Z_i$, and the corresponding matrices $G_1, S$ and $G_2$ of sizes $m \times r_2$, $r_1 \times r_2$ and $r_2 \times n$, respectively, using all four sets. In $R_i$, we mask all values equal to $0$.

\begin{table}[ht]
\centering 
\begin{tabular}{c c c c c c c} \hline
Network & $m$ & $r_1$ & $r_2$ & $n$ & $\mu$ \\
\hline
$N_1$ & $65$ & $21$ & $2$ & $52$  & $84\%$ \\ $N_2$ & $57$ & $22$ & $5$ & $56$  & $81\%$ \\ $N_3$ & $57$ & $24$ & $2$ & $57$ & $81\%$ \\ $N_4$ & $60$ & $21$ & $2$ & $57$ & $84\%$ \\ $N_5$ & $60$ & $20$ & $4$ & $56$ & $83\%$ \\ $N_6$ & $61$ & $19$ & $2$ & $58$ & $85\%$ \\ $N_7$ & $57$ & $18$ & $15$ & $50$ & $76\%$ \\ $N_8$ & $52$ & $23$ & $14$ & $51$ & $74\%$ \\ $N_9$ & $67$ & $4$ & $2$ & $67$ & $96\%$ \\ $N_{10}$ & $65$ & $15$ & $2$ & $58$ & $88\%$ \\ \hline 
\end{tabular}
\caption{Louvain partitioning of $N_i$ where $ i \in [1,10]$, containing $140$ nodes from days $20-31$.} 
\label{louvain_partitions}
\end{table}

We run matrix factorization methods on each $R_i$ matrix using the corresponding factor matrices $G_1, S,$ and $G_2$ for fixed initialization and obtain updated matrices $G_1, S,$ and $G_2$.
Since we use fixed initialization, we evaluate each method only once because there is no presence of randomness. In Table~\ref{table_results_real_data}, we present the comparison between our proposed \texttt{triFastSTMF} and \texttt{Fast-NMTF}. The results show that \texttt{Fast-NMTF} achieves a smaller approximation error RMSE-A, while \texttt{triFastSTMF} outperforms \texttt{Fast-NMTF} in a better prediction error RMSE-P. This result is consistent with previous research in~\cite{stmf} and~\cite{faststmf}, where we have shown that matrix factorization over the tropical semiring is more robust to overfitting compared to methods using standard linear algebra.

\begin{table*}[t]
    \centering
    \scalebox{0.85}{
    \begin{tabular}{c|c|c|c|c|c|c|c|c|c|c|c}
        \textbf{Metric} & \textbf{Method} & $R_1$ & $R_2$ & $R_3$ & $R_4$ & $R_5$ & $R_6$ & $R_7$ & $R_8$ & $R_9$  & $R_{10}$ \\
        \hline
        RMSE-P & \texttt{triFastSTMF} & 1.90 & \textbf{1.07} & 2.32 & \textbf{1.09} & \textbf{1.15} & 1.78 & \textbf{0.88} & \textbf{0.88} & \textbf{1.38} & 2.05 \\
        & \texttt{Fast-NMTF} & \textbf{0.93} & 1.25 & \textbf{1.42} & 1.11 & 1.21 & \textbf{0.82} & 1.17 & 1.10 & 1.48 & \textbf{1.59}\\
        
        \hline
RMSE-A & \texttt{triFastSTMF} & 1.90 & 0.90 & 2.34 & 1.23 & 1.02 & 1.64 & 0.92 & 0.69 & 1.33 & 2.12\\
        & \texttt{Fast-NMTF} & \textbf{0.54} & \textbf{0.37} & \textbf{0.49} & \textbf{0.52} & \textbf{0.43} & \textbf{0.53} & \textbf{0.17} & \textbf{0.17} & \textbf{0.80} & \textbf{0.58}\\
        \hline
    \end{tabular}}
    \caption{RMSE-A and RMSE-P on data matrices $R_i$. The result of the best method in the comparison between \texttt{triFastSTMF} and \texttt{Fast-NMTF} is shown in bold.}
    \label{table_results_real_data}
\end{table*}

\begin{table*}[t]
    \centering
    \scalebox{0.85}{
    \begin{tabular}{c|c|c|c|c|c|c|c|c|c|c|c}
        \textbf{Metric} & \textbf{Method} & $N_1$ & $N_2$ & $N_3$ & $N_4$ & $N_5$ & $N_6$ & $N_7$ & $N_8$ & $N_9$  & $N_{10}$ \\
        \hline
        RMSE-P & \texttt{triFastSTMF} & 9.43 & \textbf{11.40} & 12.76 & 10.35 & \textbf{10.99} & \textbf{8.56} & \textbf{10.63} & \textbf{12.85} & 7.56 & 9.28 \\
        & \texttt{Fast-NMTF} & \textbf{7.24} & 3602725.48 & \textbf{6.21} & \textbf{7.09} & 289129.41 & 116821.55 & 12733965.01 & 763401.17 & \textbf{6.24} & \textbf{6.18}\\
        
        \hline
RMSE-A & \texttt{triFastSTMF} & 8.43 & \textbf{9.95} & 12.37 & 9.63 & \textbf{10.10} & \textbf{8.50} & \textbf{10.20} & \textbf{11.90} & 7.35 & 8.75\\
        & \texttt{Fast-NMTF} & \textbf{6.07} & 1034154.82 & \textbf{6.22} & \textbf{6.02} & 339449.43 & 118555.72 & 13423203.65 & 691714.60 & \textbf{6.02} & \textbf{6.15}\\
        \hline
    \end{tabular}}
    \caption{RMSE-A and RMSE-P on network $N$ using different partitions of $N_i$. The result of the best method in the comparison between \texttt{triFastSTMF} and \texttt{Fast-NMTF} is shown in bold.}
    \label{table_results_real_data_whole_graph}
\end{table*}

The matrix $R_i$ contains only edges $X_i-Z_i$. All other edges $X_i-Y_i$, $Y_i-W_i$ and $W_i-Z_i$ are hidden in the corresponding factor matrices $G_1, S$ and $G_2$. If we want to obtain predictions for all edges of network $N$ using different partitions of $N_i$, we need to also consider factor matrices, not just matrix $R_i$. 
To achieve this, we take into account the corresponding $G_1, S$ and $G_2$ including their products $G_1 \otimes S$, $S \otimes G_2$ and $G_1 \otimes S \otimes G_2$.
The edges that were removed from $N$ during the sampling process to obtain $N_i$ are used to measure the prediction error, while the edges in $N_i$ are used for approximation.

In Table~\ref{table_results_real_data_whole_graph}, we present the comparison between our proposed \texttt{triFastSTMF} and \texttt{Fast-NMTF} on network $N$ using different partitions of $N_i$. The results show that \texttt{triFastSTMF} and \texttt{Fast-NMTF} have the same number of wins regarding the RMSE-A and RMSE-P. However, the main difference between \texttt{triFastSTMF} and \texttt{Fast-NMTF} is in the fact that \texttt{Fast-NMTF} achieves an enormous error compared to \texttt{triFastSTMF} in half of the cases. This is because now we are also predicting edges $X_i-Y_i, Y_i-W_i, W_i-Z_i$ and $X_i-W_i, Y_i-Z_i$, which we obtain by multiplying the corresponding factor matrices $G_1, S$ and $G_2$ properly. There is no guarantee that the factor matrices $G_1, S$, and $G_2$ and their products are on the same scale as the data matrix $R_i$ on which the matrix factorization methods were trained. Since \texttt{Fast-NMTF} uses standard linear algebra, one more matrix multiplication is needed to get to the original data scale. Using standard $+$ and $\times$ operators results in significant error, since the predicted values expand in magnitude quickly. \texttt{triFastSTMF} does not have this problem because it is based on tropical semiring, and the operators $\max$ and $+$ are more averse to predicting large values.

\section{Conclusion}
\label{sec:conclusion}
Matrix factorization is a popular data embedding approach used in various machine learning applications. Most factorization methods use standard linear algebra. Recent research introduced tropical semiring to matrix factorization, which enables the modeling of nonlinear relations. 
Two-factorization approaches are often applied to study bipartite and tripartite networks. However, tri-factorization is suitable for application on four-partition networks, and to the best of our knowledge, our work is the first to explore this option.

In this study, we evaluate different strategies based on two-factorization, called \texttt{triSTMF} and \texttt{Consecutive}. Both strategies have different drawbacks, such as a slow optimization process in \texttt{triSTMF} and the overfitting of one of the factor matrices in \texttt{Consecutive}. These limitations have motivated us to develop a novel tri-factorization approach that addresses the limitations of \texttt{triSTMF} and \texttt{Consecutive}. We propose \texttt{triFastSTMF}, a tri-factorization algorithm over the tropical semiring that can be used for a single data source. Our proposed algorithm is based on \texttt{FastSTMF}, a two-factorization method, with the necessary modifications for tri-factorization. We also provide a detailed theoretical analysis for solving the linear system and computing the third factor matrix. The obtained solution is used for the optimization in the proposed \texttt{triFastSTMF}.

We tested the method on synthetic and real data, applied it to the edge approximation and prediction task in four-partition networks and demonstrated that \texttt{triFastSTMF} achieves close approximation and prediction results as \texttt{Fast-NMTF}. Additionally,  \texttt{triFastSTMF} is more robust than \texttt{Fast-NMTF} in cases when methods are fitted on a part of the network and then used to approximate and predict the entire network.

Although in this study we presented the proposed method on a \textit{single} data source, we established the basis for creating a model capable of combining \textit{multiple} data sources. Our future work involves the application and modification of the proposed \texttt{triFastSTMF} to the data fusion problem, which often employs tri-factorization.

\section*{Supporting information}
The supporting Python notebooks and the data are available on GitHub \url{https://github.com/Ejmric/triFastSTMF}.
We downloaded the real-world interaction dataset of an ant colony named insecta-ant-colony3 from  \textit{Animal Social Networks} data collection on \url{http://networkrepository.com}.

\section*{Author's contributions}
AO, PO and TC designed the study. AO wrote the software application and performed experiments. AO, PO, and TC analyzed and interpreted the results on synthetic and real data. AO wrote the initial draft of the paper. All authors edited and approved the final manuscript.

\section*{Declaration of competing interest}
The authors declare that they have no known competing financial interests or personal relationships that could have appeared to influence the work reported in this paper.

\section*{Funding}
This work is supported by the Slovene Research Agency, Young Researcher grant (52096) awarded to AO, and Research Program funding P1-0222 to PO and P2-0209 to TC.

\bibliographystyle{unsrt}

\bibliography{bibliography}

\begin{thebibliography}{10}

\bibitem{koren2009matrix}
Yehuda Koren, Robert Bell, and Chris Volinsky.
\newblock Matrix factorization techniques for recommender systems.
\newblock {\em Computer}, 42(8):30--37, 2009.

\bibitem{liu2006nonnegative}
Weixiang Liu, Nanning Zheng, and Qubo You.
\newblock Nonnegative matrix factorization and its applications in pattern
  recognition.
\newblock {\em Chinese Science Bulletin}, 51:7--18, 2006.

\bibitem{datafusion}
Marinka {\v{Z}}itnik and Bla{\v{z}} Zupan.
\newblock Data fusion by matrix factorization.
\newblock {\em IEEE transactions on pattern analysis and machine intelligence},
  37(1):41--53, 2015.

\bibitem{linear_regression}
James Hook.
\newblock Linear regression over the max-plus semiring: algorithms and
  applications.
\newblock {\em arXiv preprint arXiv:1712.03499}, ,, 2017.

\bibitem{latitude}
Sanjar Karaev, James Hook, and Pauli Miettinen.
\newblock Latitude: A model for mixed linear-tropical matrix factorization.
\newblock In {\em Proceedings of the 2018 SIAM International Conference on Data
  Mining}, pages 360--368. SIAM, 2018.

\bibitem{capricorn}
Sanjar Karaev and Pauli Miettinen.
\newblock Capricorn: An algorithm for subtropical matrix factorization.
\newblock In {\em Proceedings of the 2016 SIAM International Conference on Data
  Mining}, pages 702--710. SIAM, 2016.

\bibitem{subalgs}
Sanjar {Karaev} and Pauli Miettinen.
\newblock Algorithms for approximate subtropical matrix factorization.
\newblock {\em arXiv preprint arXiv:1707.08872}, 2017.

\bibitem{nmf}
Daniel~D Lee and H~Sebastian Seung.
\newblock Learning the parts of objects by non-negative matrix factorization.
\newblock {\em Nature}, 401(6755):788, 1999.

\bibitem{bmf}
Zhongyuan Zhang, Tao Li, Chris Ding, and Xiangsun Zhang.
\newblock Binary matrix factorization with applications.
\newblock In {\em Seventh IEEE International Conference on Data Mining (ICDM
  2007)}, pages 391--400. IEEE, 2007.

\bibitem{mnih2008probabilistic}
Andriy Mnih and Russ~R Salakhutdinov.
\newblock Probabilistic matrix factorization.
\newblock In {\em Advances in neural information processing systems}, pages
  1257--1264, 2008.

\bibitem{stmf}
Amra Omanovi{\'c}, Hilal Kazan, Polona Oblak, and Toma{\v{z}} Curk.
\newblock Sparse data embedding and prediction by tropical matrix
  factorization.
\newblock {\em BMC bioinformatics}, 22(1):1--18, 2021.

\bibitem{faststmf}
Amra Omanovi{\'c}, Polona Oblak, and Toma{\v{z}} Curk.
\newblock {FastSTMF}: Efficient tropical matrix factorization algorithm for
  sparse data.
\newblock {\em arXiv preprint arXiv:2205.06619}, 2022.

\bibitem{cancer}
Sanjar Karaev and Pauli Miettinen.
\newblock Cancer: Another algorithm for subtropical matrix factorization.
\newblock In {\em Joint European Conference on Machine Learning and Knowledge
  Discovery in Databases}, pages 576--592. Springer, 2016.

\bibitem{nonlinear_recomm}
Jason Weston, Ron~J Weiss, and Hector Yee.
\newblock Nonlinear latent factorization by embedding multiple user interests.
\newblock In {\em Proceedings of the 7th ACM conference on Recommender
  systems}, pages 65--68, 2013.

\bibitem{semiring_rank_mf}
Thanh Le~Van, Siegfried Nijssen, Matthijs Van~Leeuwen, and Luc De~Raedt.
\newblock Semiring rank matrix factorization.
\newblock {\em IEEE Transactions on Knowledge and Data Engineering},
  29(8):1737--1750, 2017.

\bibitem{tropical_geometry}
Liwen Zhang, Gregory Naitzat, and Lek-Heng Lim.
\newblock Tropical geometry of deep neural networks.
\newblock In Jennifer Dy and Andreas Krause, editors, {\em Proceedings of the
  35th International Conference on Machine Learning}, volume~80 of {\em
  Proceedings of Machine Learning Research}, pages 5824--5832,
  Stockholmsmässan, Stockholm Sweden, 10--15 Jul 2018. PMLR.

\bibitem{fu2018matrix}
Guangyuan Fu, Jun Wang, Carlotta Domeniconi, and Guoxian Yu.
\newblock Matrix factorization-based data fusion for the prediction of
  {lncRNA}--disease associations.
\newblock {\em Bioinformatics}, 34(9):1529--1537, 2018.

\bibitem{wang2011fast}
Hua Wang, Feiping Nie, Heng Huang, and Fillia Makedon.
\newblock Fast nonnegative matrix tri-factorization for large-scale data
  co-clustering.
\newblock In {\em Twenty-Second International Joint Conference on Artificial
  Intelligence}, 2011.

\bibitem{vzitnik2013discovering}
Marinka {\v{Z}}itnik, Vuk Janji{\'c}, Chris Larminie, Bla{\v{z}} Zupan, and
  Nata{\v{s}}a Pr{\v{z}}ulj.
\newblock Discovering disease-disease associations by fusing systems-level
  molecular data.
\newblock {\em Scientific reports}, 3(1):1--9, 2013.

\bibitem{vcopar2019fast}
Andrej {\v{C}}opar, Bla{\v{z}} Zupan, and Marinka {\v{Z}}itnik.
\newblock Fast optimization of non-negative matrix tri-factorization.
\newblock {\em PloS one}, 14(6):e0217994, 2019.

\bibitem{tropical}
Bart De~Schutter and Bart De~Moor.
\newblock Matrix factorization and minimal state space realization in the
  max-plus algebra.
\newblock In {\em Proceedings of the 1997 American Control Conference (Cat. No.
  97CH36041)}, volume~5, pages 3136--3140. IEEE, 1997.

\bibitem{hougardy2010floyd}
Stefan Hougardy.
\newblock The {Floyd--Warshall} algorithm on graphs with negative cycles.
\newblock {\em Information Processing Letters}, 110(8-9):279--281, 2010.

\bibitem{blondel2008fast}
Vincent~D Blondel, Jean-Loup Guillaume, Renaud Lambiotte, and Etienne Lefebvre.
\newblock Fast unfolding of communities in large networks.
\newblock {\em Journal of statistical mechanics: theory and experiment},
  2008(10):P10008, 2008.

\bibitem{forney1973viterbi}
G~David Forney.
\newblock The {Viterbi} algorithm.
\newblock {\em Proceedings of the IEEE}, 61(3):268--278, 1973.

\bibitem{theodosis2018analysis}
Emmanouil Theodosis and Petros Maragos.
\newblock Analysis of the {Viterbi} algorithm using tropical algebra and
  geometry.
\newblock In {\em 2018 IEEE 19th International Workshop on Signal Processing
  Advances in Wireless Communications (SPAWC)}, pages 1--5. IEEE, 2018.

\bibitem{gaubert}
Stephane Gaubert and Max Plus.
\newblock Methods and applications of (max,+) linear algebra.
\newblock In {\em Annual symposium on theoretical aspects of computer science},
  pages 261--282. Springer, 1997.

\bibitem{mersch2013tracking}
Danielle~P Mersch, Alessandro Crespi, and Laurent Keller.
\newblock Tracking individuals shows spatial fidelity is a key regulator of ant
  social organization.
\newblock {\em Science}, 340(6136):1090--1093, 2013.

\bibitem{nr-aaai15}
Ryan~A. Rossi and Nesreen~K. Ahmed.
\newblock The network data repository with interactive graph analytics and
  visualization.
\newblock In {\em AAAI}, 2015.

\bibitem{hubert1985comparing}
Lawrence Hubert and Phipps Arabie.
\newblock Comparing partitions.
\newblock {\em Journal of classification}, 2:193--218, 1985.

\bibitem{bar2001fast}
Ziv Bar-Joseph, David~K Gifford, and Tommi~S Jaakkola.
\newblock Fast optimal leaf ordering for hierarchical clustering.
\newblock {\em Bioinformatics}, 17(suppl\_1):S22--S29, 2001.

\end{thebibliography}

\end{document}